\documentclass[pdflatex,sn-mathphys-num]{sn-jnl}%

\usepackage{graphicx}%
\usepackage{multirow}%
\usepackage{amsmath,amssymb,amsfonts}%
\usepackage{amsthm}%
\usepackage{mathrsfs}%
\usepackage[title]{appendix}%
\usepackage{xcolor}%
\usepackage{textcomp}%
\usepackage{manyfoot}%
\usepackage{booktabs}%
\usepackage{algorithm}%
\usepackage{algorithmicx}%
\usepackage{algpseudocode}%
\usepackage{listings}%
\usepackage{lineno}
\usepackage{float}
\usepackage{mathtools}
\usepackage{graphicx}
\usepackage{subcaption}

\theoremstyle{thmstyleone}%
\newtheorem{proposition}{Proposition}%

\definecolor{darkgreen}{rgb}{0.0,0.5,0.0}

\begin{document}

\title[Solving Inverse Problems of Chaotic Systems with Bidirectional Conditional Flow Matching]{Solving Inverse Problems of Chaotic Systems with Bidirectional Conditional Flow Matching}

\author[3]{\fnm{Peiyan} \sur{Hu}} \email{hupeiyan@westlake.edu.cn}
\intern{}

\author[7]{\fnm{Jian} \sur{Zhang}} \email{zhangjian191@mails.ucas.ac.cn}

\author[1]{\fnm{Jiashu} \sur{Pan}}\email{panjiashu@westlake.edu.cn}

\author[1]{\fnm{Ruiqi} \sur{Feng}}\email{fengruiqi@westlake.edu.cn}

\author[1]{\fnm{Tao} \sur{Zhang}}\email{zhangtao@westlake.edu.cn}

\author[3]{\fnm{Zhi-Ming} \sur{Ma}}\email{mazm@amt.ac.cn}

\author[4,5,6]{\fnm{Yuan-Sen} \sur{Ting}}\email{ting.74@osu.edu}

\author*[2]{\fnm{Gongjie} \sur{Li}}\email{gongjie.li@physics.gatech.edu}

\author*[1]{\fnm{Tailin} \sur{Wu}}\email{wutailin@westlake.edu.cn}

\affil[1]{\orgdiv{Department of Artificial Intelligence}, \orgdiv{School of Engineering}, \orgname{Westlake University}, \orgaddress{\city{Hangzhou}, \state{Zhejiang}, \postcode{310030},  \country{China}}}

\affil[2]{\orgdiv{Center for Relativistic Astrophysics, School of Physics}, \orgname{Georgia Institute of Technology}, \orgaddress{\city{Atlanta}, \postcode{GA 30313}, \country{USA}}}

\affil[3]{\orgdiv{Academy of Mathematics and Systems Science}, \orgname{Chinese Academy of Sciences}, \orgaddress{\city{Beijing}, \postcode{100190}, \country{China}}}

\affil[4]{\orgname{Max-Planck-Institut f\"ur Astronomie}, \orgaddress{\city{K\"onigstuhl}, \postcode{D-69117 Heidelberg}, \country{Germany}}}

\affil[5]{\orgdiv{Department of Astronomy}, \orgname{The Ohio State University}, \orgaddress{\city{Columbus}, \postcode{OH 43210}, \country{USA}}}

\affil[6]{\orgdiv{Center for Cosmology and AstroParticle Physics (CCAPP)}, \orgname{The Ohio State University}, \orgaddress{\city{Columbus}, \postcode{OH 43210}, \country{USA}}}

\affil[7]{\orgdiv{School of Astronomy and Space Sciences}, \orgname{Chinese Academy of Sciences}, \orgaddress{\city{Beijing}, \postcode{100049}, \country{China}}}

\abstract{

Modeling chaotic systems is crucial yet challenging. 
Inverse problems in chaotic dynamics, namely inferring initial conditions from final states, remain largely unsolved because of ill-posedness, non-uniqueness, instability, and potentially chaotic time-reverse dynamics.
We address this open problem with Bidirectional Conditional Flow Matching (Bi-CFM), 
which learns bidirectional mappings between distributions of initial and final states to capture the stochasticity of chaotic evolution and mitigate exponential error accumulation over time. 
Furthermore, for systems with conservation laws, we extend it to Conservation-constrained Bi-CFM (CBi-CFM).
Across the classic Lorenz, Circuit, and high-dimensional Lorenz 96 systems, Bi-CFM improves five distribution-level metrics over baselines while achieving a speedup of more than two orders of magnitude.
In the three-body planet-planet scattering problem\footnote{Planet-planet scattering is the process that planets gravitationally perturb each other, which leads to orbital variations and instability.}~ in planetary dynamics, CBi-CFM better respects conservation laws, with conservation errors comparable to those of the ground truth.
Finally, on real observations of globular clusters, collisional million-body systems shaped by $\sim 10^{10}$ years (10 Gyr) of evolution, our method represents an advance in accuracy, establishing a scalable route to solving inverse problems of long-timescale real-world chaotic dynamics.
}

\keywords{Chaotic system, Inverse problem, Generative model}

\maketitle

\section{Introduction}\label{sec.intro}
Chaotic systems are important yet complex nonlinear physical systems, arising in electrical systems \cite{hasler2005electrical, sprott2000simple}, climate systems \cite{tsonis1989chaos, broecker1995chaotic}, astronomy \cite{laskar1993chaotic, contopoulos2002order, barrow1977chaotic}, and biology \cite{olsen1977chaos, chialvo1990low}. Their intrinsic complexity poses fundamental challenges for modelling and analysis, primarily because small perturbations in the initial state can be amplified exponentially over time, as illustrated in Fig.~\ref{fig:fig1}\textbf{c}, where trajectories initialized with perturbations of order $10^{-6}$ rapidly diverge. 
In this work, we focus on the inverse problem of chaotic systems, namely, inferring initial conditions from target final observations. This long-standing challenge has broad implications in science \cite{Pecora1990SynchronizationIC, Kantas2013SequentialMC, Ouannas2016OnIP} and engineering \cite{Bennett1996GeneralizedIO, Gallet2021StructuralEF}, as recovering initial states is essential for understanding the mechanisms and histories of chaotic physical systems \cite{jung1999inverse}.
Despite its significance, this task is challenging, mainly due to three fundamental reasons. First, such inverse problems are ill-posed, meaning that their solutions may be non-unique or unstable \cite{kaipio2005statistical}. Second, one of the goals of inverse problems is to infer initial states whose evolved final states are close to target final states. However, errors in the initial states amplify during the forward evolution, leading to larger deviations from the target final states. Third, inverse inference may suffer from information loss when certain bodies or components of the system vanish during evolution. For example, a planet may be scattered out of the planetary system \citep{Rasio96,Goldreich04,Chatterjee08}, which causes the loss of its information and makes the reconstruction rely solely on the remaining components.

For a deterministic and non-chaotic physical system, once the governing equations of motion are known, traditional numerical methods can infer the initial conditions with small numerical errors. The convergence properties of these methods ensure that by starting from the target final state and integrating the system backward in time, one can obtain a high-fidelity estimate of the initial state whose forward evolution closely matches the target \cite{leveque1998finite, lu2024trace, binev2004adaptive}. However, in chaotic systems, randomness arises because deterministic nonlinear dynamics amplify infinitesimal perturbations in the initial conditions into unpredictable behaviors. Consequently, traditional numerical approaches fail to accurately recover the initial state even when the equations are known and integrated backward. In particular, when information loss occurs during forward evolution, for example, when certain bodies or components vanish from the system, traditional numerical methods may become entirely inapplicable. Recent progress in deep learning has opened new possibilities for addressing this challenge. Neural networks have shown the capability to model chaotic dynamics with acceleration, accuracy, and efficiency, leading to a growing number of approaches in this area. However, existing studies mainly focus on other tasks of chaotic systems, such as the prediction task \cite{du2024conditional, huwavelet, alibert2025transformer, wang2024interpretable}. Several studies have also focused on using deep learning to solve inverse problems, but they have not focused on the more challenging setting of chaotic systems \cite{raissi2019physics,yang2021b,bastek2023inverse,molinaro2023neural,huang2024diffusionpde}.

\begin{figure}[t]
    \centering
    \vspace{-15pt}
    \includegraphics[width=0.75\linewidth, trim=5 320 391 14, clip]{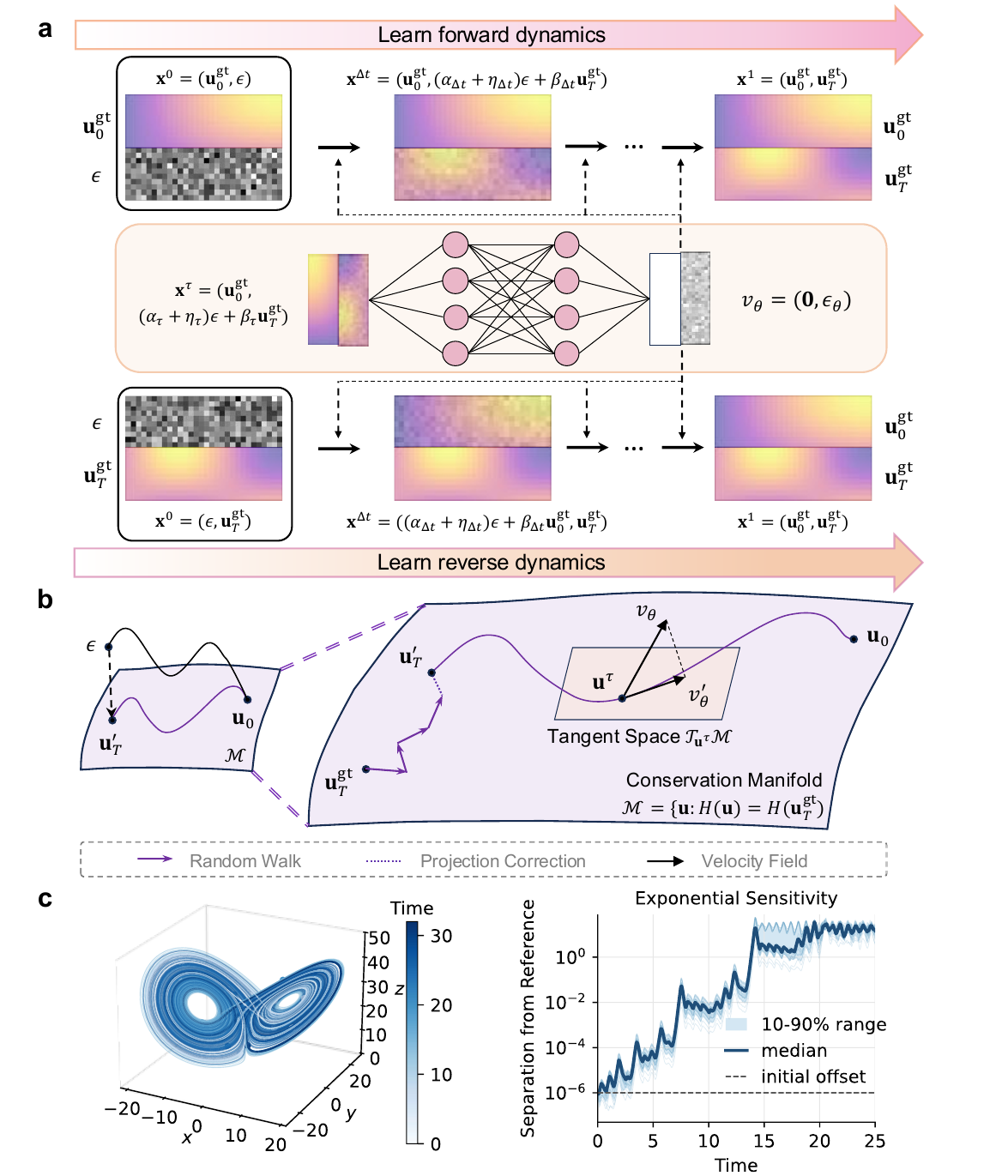}
    \caption{\textbf{Schematic illustration of Bi-CFM and CBi-CFM, and the sensitive dependence on initial conditions.} 
    \textbf{a. Bi-CFM framework.} Bi-CFM extends Conditional Flow Matching by jointly learning the forward and inverse conditional distribution $p(\mathbf{u}_T^\text{gt}|\mathbf{u}_0^\text{gt})$ (upper panel) and $p(\mathbf{u}_0^\text{gt}|\mathbf{u}_T^\text{gt})$ (lower panel). The noise-free conditioning state is concatenated with the current flow state as model input. 
    \textbf{b. CBi-CFM framework.} CBi-CFM constrains the probability flow within the conservation manifold $\mathcal{M}$ to enforce conservation quantity $H$. The prior distribution is sampled via random walks on $\mathcal{M}$, after which the projection correction is applied to mitigate numerical errors. In addition, the velocity is projected onto the tangent space $\mathcal{T}_{\mathbf{u}_t'}\mathcal{M}$, ensuring conservation throughout the flow.
    \textbf{c. Exponential sensitivity.} In the Lorenz system, perturbations of order $10^{-6}$ are rapidly amplified, causing initially nearby trajectories to diverge across the chaotic attractor.
    }
    \label{fig:fig1}
\end{figure}

In this work, as shown in Fig. \ref{fig:fig1}, we propose Bidirectional Conditional Flow Matching (Bi-CFM), an algorithm for the inverse problem of chaotic systems, and its extended version, Conservation-constrained Bi-CFM (CBi-CFM), which enforces conservation laws when conservation quantities are known in the system. We analyze and address the inverse problem of chaotic systems systematically. The method builds upon the probabilistic generative model Conditional Flow Matching (CFM) \cite{lipmanflow, liuflow}, the advantages of which stem from two key insights. First, under limited computational resources, a chaotic system can be regarded as a stochastic system \cite{berliner1992statistics, LEITH1996481, stone2019statistical}, and we therefore learn the distribution of its states rather than the deterministic trajectory. This probabilistic modeling framework is also more suitable for addressing the ill-posed problem, since the solutions may not be unique. 
Second, we directly learn the end-to-end mapping between initial and final states, rather than relying on time-dependent iterative updates as in traditional numerical solvers. This design avoids repeatedly accumulating numerical errors along the temporal dimension, making the inference error more controllable even as the horizon length increases. 
Without iterations proportional to the horizon length, this design mitigates the error accumulation in chaotic dynamics. In addition, we introduce a bidirectional modeling framework that jointly learns both forward and reverse dynamics, which allows the model to capture the global structure of chaotic systems rather than focusing solely on reproducing initial states. By enforcing mutual consistency between the forward evolution and the time-reverse dynamics, our approach mitigates the amplification of errors during forward evolution that typically arise in chaotic systems. Furthermore, for physical systems governed by conservation laws, we extend this framework to CBi-CFM, which enforces the learned probability flow path to evolve within the conservation manifold defined by the conservation principles. This extension enables the model to generate physically consistent solutions under chaotic conditions. 

We demonstrate the effectiveness of our method through experiments across diverse chaotic systems, including the classic and real-world ones. On the classic Lorenz system, Circuit system, and high-dimensional Lorenz 96 system, our proposed method outperforms other baselines under five distribution-level metrics, including the reverse integration based on traditional numerical solvers. Additionally, in the challenging three-body planetary system with scattering and collision, the inverse problem spans up to $10^{6}$ orbital periods and contains information loss. Our method successfully infers the distribution and causal relationships close to the ground truth. Moreover, CBi-CFM preserves energy conservation at the same order as the original data. To test our method in a real-world setting, we further study the inverse problem of long-term globular-cluster evolution from observational data. As ancient dense stellar systems whose initial states have been reshaped over $\sim 10$ Gyr of dynamical evolution, globular clusters encode the assembly history of the Milky Way and provide a challenging testbed for Galactic archaeology. Compared with a state-of-the-art Monte Carlo method, Bi-CFM yields final simulated profiles that better match the observations, both visually and quantitatively. These results show that Bi-CFM can serve as a scalable tool for solving inverse problems in long-timescale chaotic systems.

\section{Results}\label{sec.result}

The overview of our proposed model is shown in Fig. \ref{fig:fig1}. The Bi-CFM framework in Fig. \ref{fig:fig1}\textbf{a} is based on CFM, which learns the velocity field $v_\theta$ that defines the flow path from Gaussian noise to the data distribution. 
Building upon this, we introduce a bidirectional modeling strategy that simultaneously learns the forward and reverse dynamics. 
As for the Conservation-constrained Bi-CFM (CBi-CFM) framework in Fig. \ref{fig:fig1}\textbf{b}, we constrain the entire flow path within the conservation manifold $\mathcal{M}$ defined by the conservation law, ensuring that the inferred initial states satisfy the conserved quantity $H$. 
Specifically, we first obtain a prior distribution that satisfies the conservation law. Then, we constrain the velocity field to lie within the tangent plane $\mathcal{T}_{\mathbf{u}^\tau} \mathcal{M}$ of the conservation manifold $\mathcal{M}$, ensuring that the entire probability flow remains within $\mathcal{M}$.
More details are given in the Method Section (Section \ref{sec.meth}).

For clarity, we summarize here the notation used throughout this paper. The ground-truth trajectory is denoted as $\mathbf{u}_t^{\text{gt}}$ for $t = 0, \dots, T$, and the initial state inferred by the algorithm is represented as $\mathbf{u}_0$.
The trajectory obtained by forward evolving $\mathbf{u}_0$ using the same high-precision numerical solver is written as $g(\mathbf{u}_0, t)$, where $g(\cdot, t)$ denotes the mapping from the initial state to the state at time $t$. In addition, the evaluation metrics for solving inverse problems consist of three main aspects. The first measures how close the inferred initial state $\mathbf{u}_0$ is to the ground-truth initial state $\mathbf{u}_0^{\text{gt}}$. The second evaluates whether the final state obtained by forward evolving $\mathbf{u}_0$, denoted as $g(\mathbf{u}_0, T)$, remains close to the target final state $\mathbf{u}_T^{\text{gt}}$. The third is about the causal relationship between the states at different time steps.

\subsection{Evaluation on Three Classic Chaotic Systems}\label{sec.bicfm}

We first evaluate Bi-CFM on the Lorenz \citep{lorenz2017deterministic}, Circuit \citep{dadras2009novel}, and Lorenz 96 \citep{lorenz1996predictability} systems with increasing complexity, covering low-dimensional, multi-lobe, and high-dimensional chaotic dynamics.
CBi-CFM is not applied because these systems do not impose conservation constraints. The experiment setups and baselines are introduced in Section \ref{sec:exp_setting} and \ref{sec:baseline}. 

\textbf{Evaluation metrics.} Due to the randomness and unpredictability caused by chaos, deterministic metrics become meaningless as the evolution time increases \cite{berliner1992statistics, wang2024interpretable}. Therefore, we utilize distribution-level metrics, specifically the Wasserstein-2 distance (W-2 distance) and Kullback-Leibler divergence (KL divergence), to compare distributions. For the W-2 distance, we compare several sets of distribution distances, including between $\mathbf{u}_0$ and $\mathbf{u}_0^\text{gt}$, between $\mathbf{u}_T$ and $\mathbf{u}_T^\text{gt}$, between the entire trajectory $(\mathbf{u}_0, g(\mathbf{u}_0, 1), \dots, g(\mathbf{u}_0, T))$ and $(\mathbf{u}_0^\text{gt}, \mathbf{u}_1^\text{gt}, \dots, \mathbf{u}_T^\text{gt})$, and between the pairs of initial and target final states $(\mathbf{u}_0, \mathbf{u}_T^\text{gt})$ and $(\mathbf{u}_0^\text{gt}, \mathbf{u}_T^\text{gt})$. 
Here, we highlight that it is also important to assess whether the causal relationship between states at different time points conforms to the system's dynamics, as reflected in the last two metrics with states at different $T$. Furthermore, we consider the KL divergence between the pairs of initial and target final states, $(\mathbf{u}_0, \mathbf{u}_T^\text{gt})$ and $(\mathbf{u}_0^\text{gt}, \mathbf{u}_T^\text{gt})$, to further measure how well the inferred initial state aligns with the target final state in terms of the system's dynamic correlations. Details are given in Section \ref{sec.metric}.

\textbf{Results.} 
We evaluate different methods on the Lorenz, Circuit, and Lorenz 96 systems under several horizon lengths $T$ (measured in units of the number of periods), to examine how their ability to solve the inverse problem changes as the prediction horizon increases. The quantitative results are reorganized in Fig.~\ref{fig:2}. Fig.~\ref{fig:2}\textbf{a} shows the evolution of representative metrics across different horizon lengths. These include the W-2 distance of the inferred initial state distribution, the W-2 distance of the forward-evolved final state distribution, and the KL divergence of the paired initial and target final states. 
They respectively assess the accuracy of the inferred initial conditions, the accuracy of the final states obtained by forward evolving these initial conditions, and whether the causal relationship between the initial and final states is consistent with the underlying physical dynamics.
The remaining distribution-level metrics, including the W-2 distances of trajectories and initial--final state pairs when not shown in the main text, are provided in Supplementary Note~\textcolor{blue}{C}. These additional W-2 metrics show trends similar to the KL divergence. 
To provide a compact and explicit comparison across metrics, Fig.~\ref{fig:2}\textbf{b} shows radar plots at the longest horizon for each system. The values of each metric are normalized by the same metric-specific constant. Backward Integration is not shown in these radar plots because its values lie outside the displayed range.

\begin{figure}[t]
    \centering
    \includegraphics[width=\linewidth, trim=50 210 410 25, clip]{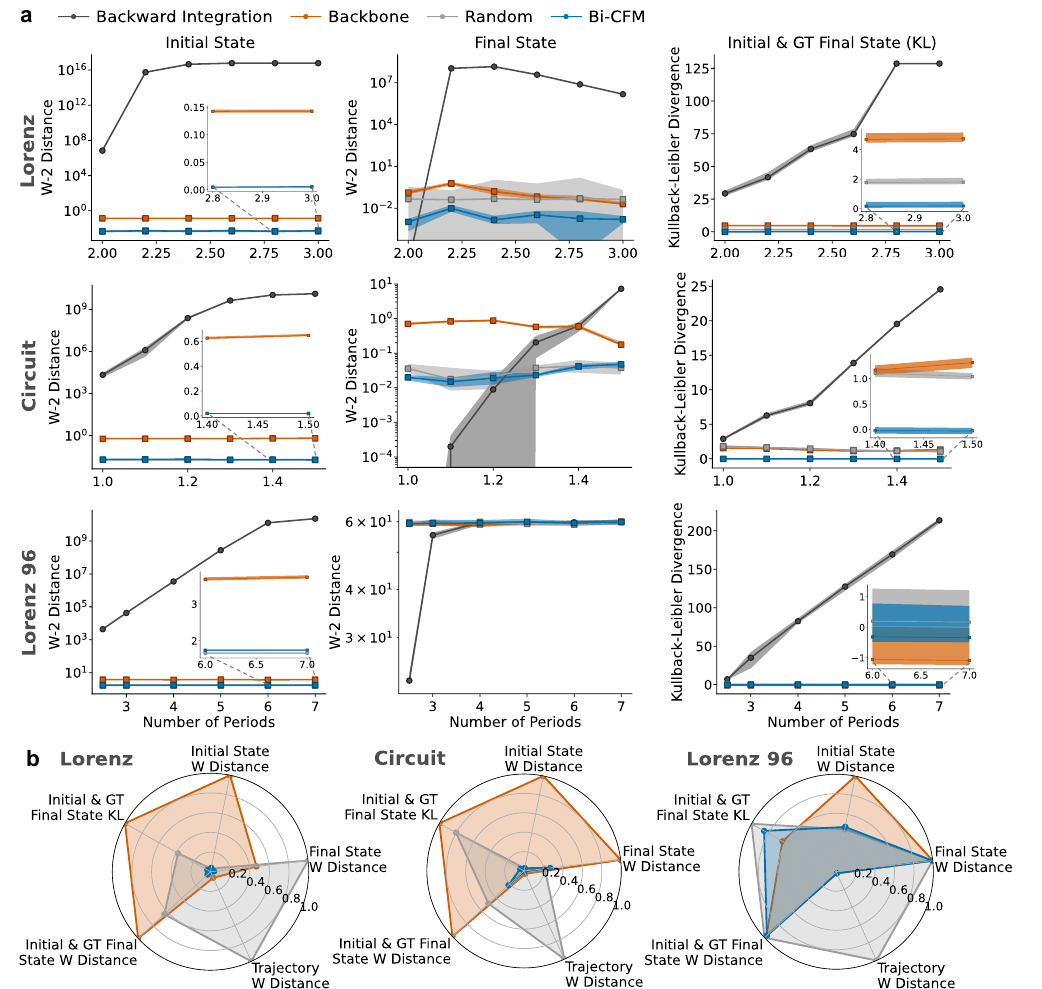} 
    \caption{\textbf{Evaluation metrics on the Lorenz, Circuit, and Lorenz 96 systems.} 
    \textbf{a.} Evaluation metrics of different methods with respect to the horizon length $T$. Shaded bands denote the 5–95\% range across repeated subsampling.
    \textbf{b.} Normalized results of evaluation metrics at the longest horizon. Results of Backward Integration lie entirely outside the circle.
    }
    \label{fig:2}
\end{figure}

As for the Lorenz and Circuit system, on the W-2 distances of initial and final state distributions, Bi-CFM performs close to Random, indicating that it successfully matches the target marginal distributions, since Random is constructed to match these marginals. However, Random ignores the causal relationship between the initial and final states. As a result, it performs poorly on metrics that depend on the joint or trajectory-level distributions, including the distributions of entire trajectories and initial--final state pairs. In contrast, Bi-CFM achieves better performance on these causal metrics. Besides, the superior performance of Bi-CFM over Backbone highlights the advantage of probabilistic generative modelling and bidirectional learning.

The dependence on horizon length further reveals the instability of iterative reverse-time numerical methods. Backward Integration can perform reasonably well when the horizon is short, where the system behaves more deterministically. However, as $T$ increases and chaoticity becomes more pronounced, its performance deteriorates rapidly. 
These results suggest that error accumulation in chaotic systems makes methods that repeatedly integrate over the time dimension less effective than end-to-end learned mappings, particularly at long horizons. By contrast, both Backbone and Bi-CFM remain comparatively stable as $T$ increases.

For the Lorenz 96 system, Bi-CFM also achieves the best overall performance, with the most notable advantage in matching the inferred initial state distribution. This is consistent with the training objective of CFM, which directly learns the target distribution of the inferred state. However, Backward Integration performs better than Random and other methods in the W-2 distance of the final state at smaller $T$. Since Random represents an optimal marginal baseline for this metric, this indicates that they all capture the final-state marginal distribution reasonably well. We also find that Backbone, Random, and Bi-CFM achieve comparable errors on the initial--final state-pair metrics, which is consistent with the low-dimensional visualizations discussed below. Nevertheless, Bi-CFM still outperforms the baselines in the remaining aspects.

\begin{figure}[t]
    \centering
    \includegraphics[width=\linewidth, trim=0 0 9 0, clip]{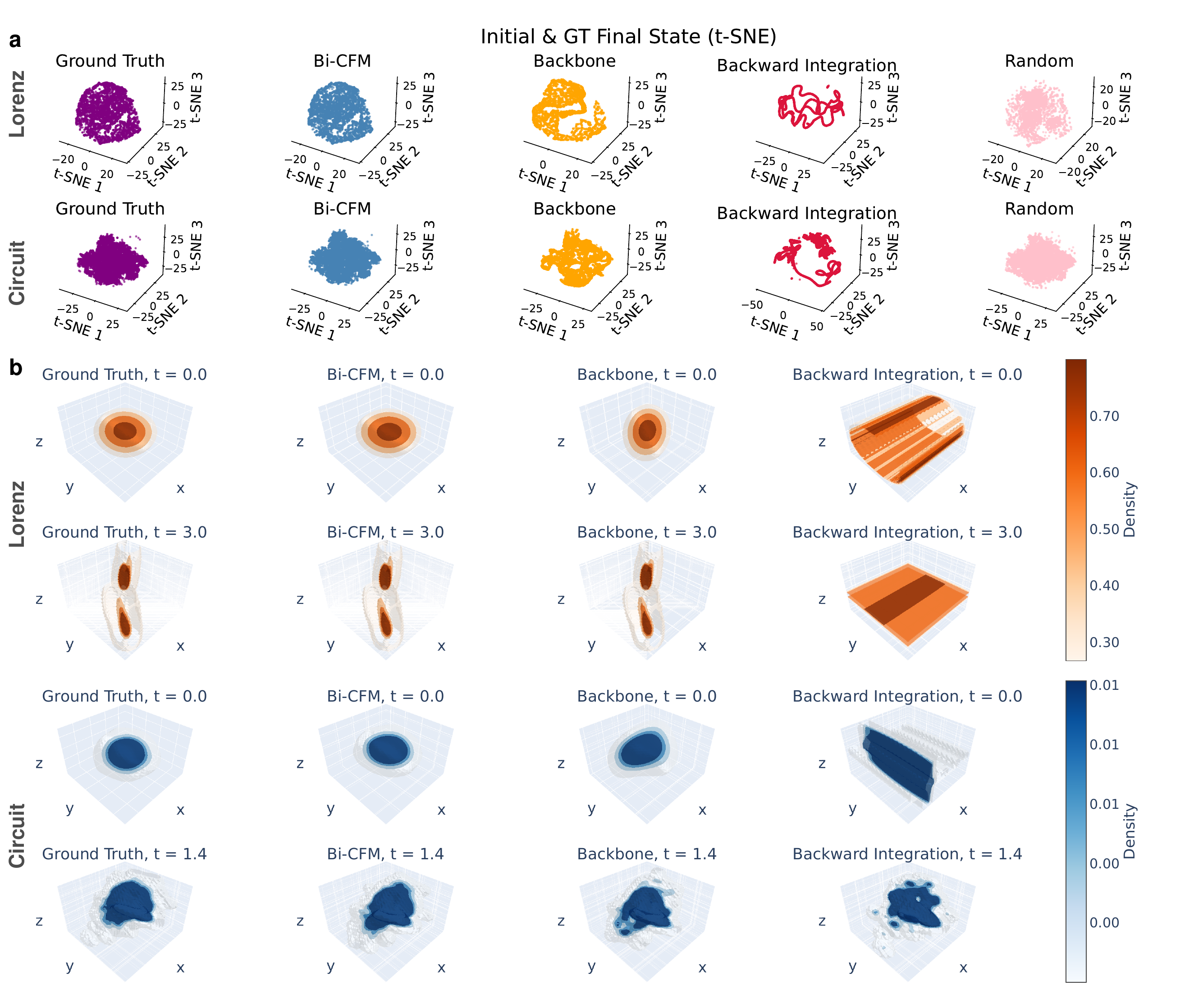} 
    \caption{\textbf{Visualizations of distributions on the Lorenz and Circuit systems.} 
    \textbf{a.} 3D t-SNE \cite{Maaten2008VisualizingDU} visualizations of paired inferred initial and target final states $(\mathbf{u}_0, \mathbf{u}_T^\text{gt})$ at a middle time step.
    \textbf{b.} Density isosurfaces of inferred initial states $\mathbf{u}_0$ and evolved final states $g(\mathbf{u}_0,T)$.
    }
    \label{fig:3}
\end{figure}

\begin{figure}[t]
    \centering
    \includegraphics[width=\linewidth, trim=7 0 11 0, clip]{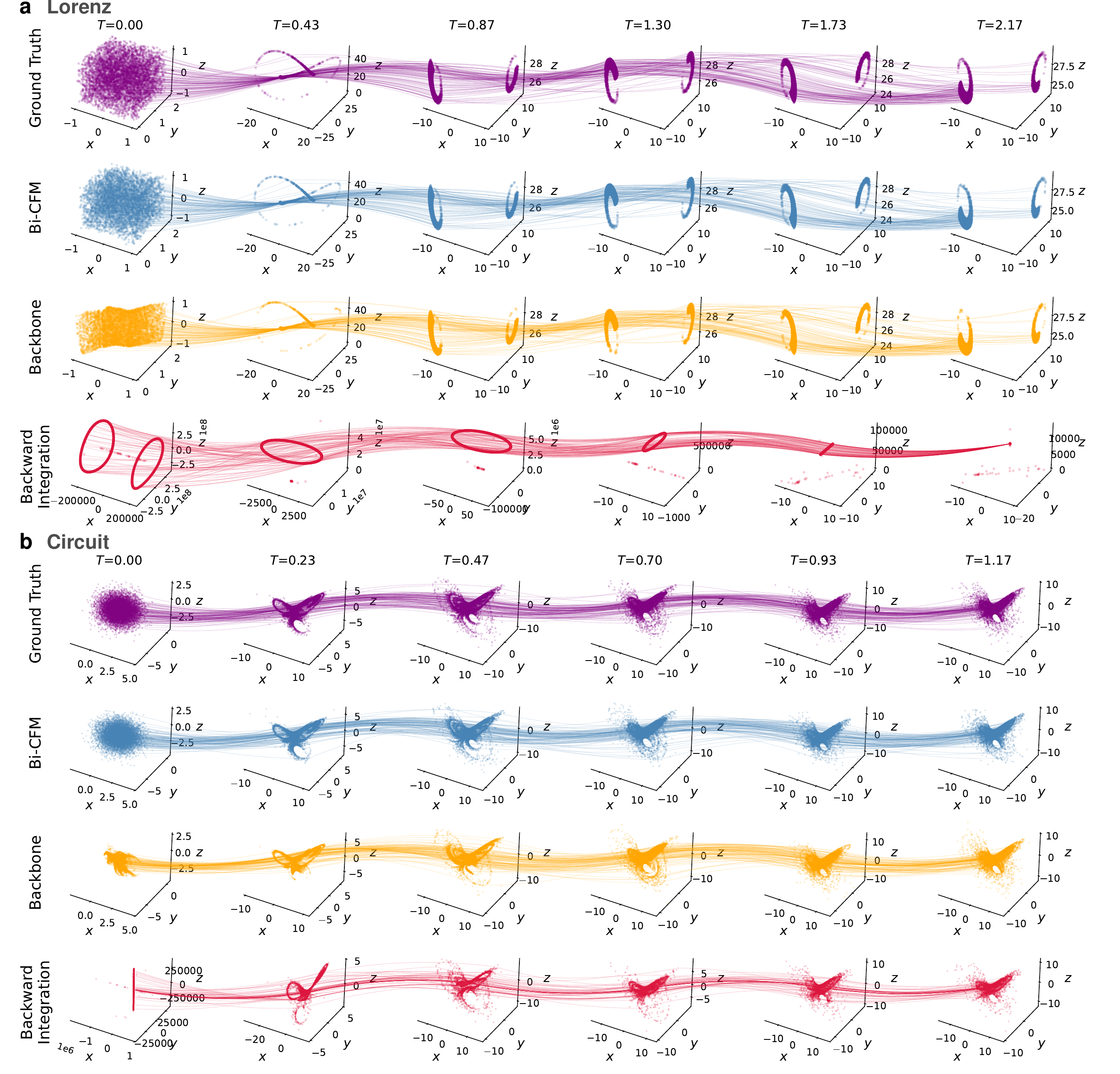} 
    \caption{\textbf{Evolved trajectories $g(\mathbf{u}_0, t)$ on the test set, starting from inferred initial states $\mathbf{u}_0$.} 
    \textbf{a.} Trajectories of the Lorenz system.
    \textbf{b.} Trajectories of the Circuit system.
    }
    \label{fig:4}
\end{figure}

We next visualize the distributions in Fig.~\ref{fig:3}. Fig.~\ref{fig:3}\textbf{a} shows the three-dimensional t-SNE embeddings of paired inferred initial states and target final states. The dimensionality reduction is performed using t-SNE \cite{Maaten2008VisualizingDU}, which provides an intuitive comparison between the joint causal distributions produced by different algorithms and the ground-truth distribution. For both systems, Bi-CFM closely matches the ground-truth distribution, whereas other methods exhibit visible deviations. In the Circuit system, Backward Integration shows the largest discrepancy. The magnitude of these discrepancies is consistent with the quantitative distributional metrics reported in Fig.~\ref{fig:2}. For Lorenz 96, owing to the high dimensionality and chaoticity of the system \citep{karimi2010extensive}, both the selected three-dimensional projections of the final states and the t-SNE embeddings of the initial--final state pairs do not show clearly discernible differences among methods. These visualizations reflect the intrinsic limitation of low-dimensional projections in revealing high-dimensional causal differences \citep{van2008visualizing,wattenberg2016how}.

Fig.~\ref{fig:3}\textbf{b} further compares the three-dimensional density isosurfaces of the inferred initial states and corresponding final states. For both the initial and final states, Bi-CFM produces distributions that are closest to the ground truth. Backbone produces distributions deviated from the ground truth, reflecting the limitation of a deterministic mapping. The corresponding two-dimensional $(x,y)$ distributions are provided in Supplementary Note~\textcolor{blue}{C}, where Backbone further shows a collapse towards a lower-dimensional one-dimensional manifold, essentially a line segment. Backward Integration is unstable, with inferred initial states showing the largest deviation from the target distribution.

Finally, we visualize the evolved trajectories in Fig.~\ref{fig:4}. Fig.~\ref{fig:4}\textbf{a} shows the Lorenz system at $T=2.6$, which is chosen as an intermediate horizon between $T=2.0$ and $T=3.0$. Fig.~\ref{fig:4}\textbf{b} shows the Circuit system at $T=1.4$. In both cases, we visualize 5000 trajectory points from the ground truth and from the forward-evolved trajectories $g(\mathbf{u}_0,t)$ of different algorithms. Bi-CFM produces trajectory distributions whose shapes are almost identical to the ground truth, whereas Backbone produces narrower distributions. Backward Integration yields unrealistic distributions with a much larger range, reaching the $10^5$ scale, and both its initial and final states deviate substantially from the target distributions. Nevertheless, in the Circuit system, despite its unrealistic initial states, its final distribution partially resembles the three-lobe attractor, highlighting the ill-posed nature of the inverse problem: substantially different initial states can lead to similar final-state distributions. For the Lorenz 96 system, trajectory visualizations are provided in Supplementary Note~\textcolor{blue}{C}.

\subsection{Evaluation on Three-body Planetary Systems}\label{sec:planet}

In this section, we apply the Bi-CFM and CBi-CFM methods to an astrophysical problem involving planet-planet dynamical interactions, including both scattering and collision events, as illustrated in Fig. \ref{fig:planet}\textbf{a}. This process is a natural consequence of planetary formation within protoplanetary disks, where nascent planets migrate inward and interact gravitationally, ultimately driving dynamical instabilities, close encounters, scattering, and collisions as the disks disperse \citep{Kokubo02, Goldreich04, Ida04}. Previous studies have shown that planet-planet scattering and collisions can explain the origin of a wide range of planetary architectures, including misaligned hot Jupiters in close proximity to their host stars, eccentric wide-orbit planets, and systems shaped by giant impacts \citep{Rasio96, Chambers96, Lin97, Adams03, Chatterjee08, Scharf09, Boss06, Mustill17}.
However, the initial dynamical states of planetary systems prior to instability remain poorly constrained. Determining these initial conditions will shed light on the origin of planets, which gives rise to a wide diversity of planetary systems, as well as a better understanding of the uniqueness of the solar system in the universe.

\textbf{Experiment setups.} For simulations, we considered systems consisting of three planets orbiting a solar-mass host star following the previous work \cite{Chatterjee08}. The scattering simulations are performed with \texttt{REBOUND} and \texttt{REBOUNDx}, incorporating general-relativistic corrections, mass- and momentum-conserving collisions, and planetary ejection. Further details of the system are presented in Section \ref{sec:exp_setting} and Supplementary Note \textcolor{blue}{A}.

Analyzing and understanding this system is meaningful, but solving its inverse problem is challenging because of the information loss phenomenon: once a planet is ejected or collides, part of the system becomes unobservable, and the initial conditions must be inferred solely from the remaining observable states. Although certain traditional numerical algorithms are time-reversible in principle \cite{lu2024trace}, reverse integration cannot be applied here due to the partial unobservability of the system. Consequently, the Backward Integration baseline is not applicable in this experiment.

In this system, energy conservation, \textit{i.e.}, that the total energy of the initial and final states is the same, serves as a fundamental physical constraint, and satisfying this conservation law is essential for accurately solving both the forward and inverse problems. We note that structure-preserving solvers, specifically symplectic integrators, are widely used in planetary dynamics \cite{rein2015whfast, mikkola1999explicit}. To account for this property, we evaluate the performance of CBi-CFMs, which explicitly enforces the conservation of energy in both the training and sampling process.

\textbf{Evaluation metrics.} As for the evaluation metrics, due to the requirement of enforcing conservation constraints, we introduce a new metric, termed the Relative Conservation Error (RCE). It quantifies how well the conservation law is satisfied and is defined as
\begin{equation*}
    \text{RCE} = \frac{|H - H_{\text{gt}}|}{|H_{\text{gt}}|},
\end{equation*}
where $H$ denotes the conserved quantity (\textit{e.g.}, total energy) computed from the inferred state, and $H_{\text{gt}}$ is the corresponding ground truth value. 

\begin{figure}[t]
    \centering
    \includegraphics[width=1.02\linewidth, trim=17 10 6 0, clip]{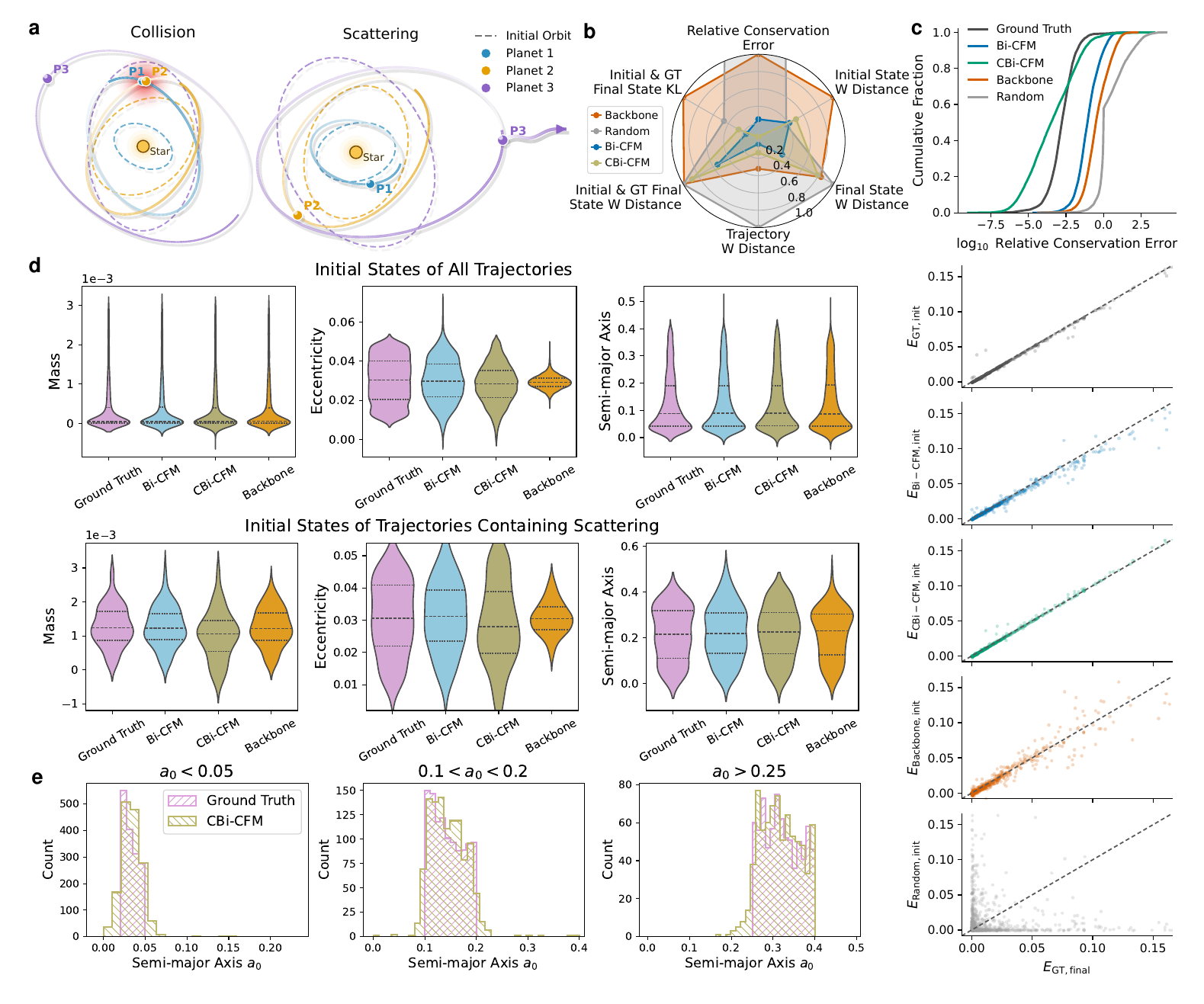}  
    \caption{\textbf{Visualizations, evaluation metrics, distributions, and energy conservations on the star-planet system.} 
    \textbf{a.} Visualizations of the collision and scattering.
    \textbf{b.} As Fig. \ref{fig:2}\textbf{b} but for the star-planet system with another metric, Relative Conservation Error.
    \textbf{c.} Cumulative distributions of Relative Conservation Error, and calibration scatter plots comparing inferred initial energy against ground-truth final energy.
    \textbf{d.} Distributions of the inferred initial features, mass, semi-major axis, and eccentricity of all trajectories and trajectories containing scattering.
    \textbf{e.} Distributions of the inferred initial semi-major axis $a_0$ on trajectories with small ($a_0<0.05$), medium ($0.1<a_0<0.2$), and large ($a_0>0.25$) initial ground truth semi-major axis $a_0^\text{gt}$.}
    \label{fig:planet}
\end{figure}

\textbf{Results.} Our experimental results are organized into four parts that correspond to the subpanels in Fig. \ref{fig:planet}. In the subpanel Fig.~\ref{fig:planet}\textbf{b}, we provide quantitative results for all methods across all metrics. The range of RCE of Random is beyond the normalized circle, and hence is omitted. First, we observe that Bi-CFM and CBi-CFM achieve the best performance, except for the Initial State W-2 distance, where Random matches the target initial distribution and thus is slightly better than Bi-CFM. Second, comparing CBi-CFM with Bi-CFM, we find that CBi-CFM attains a lower RCE, which indicates that constraining the prior distribution and the velocity field improves compliance with the conservation law. At the same time, CBi-CFM is slightly worse than Bi-CFM on distribution-based distances. This suggests a tradeoff between enforcing conservation and retaining distributional accuracy, likely because the conservation constraints reduce model expressiveness.

Then, in Fig. \ref{fig:planet}\textbf{c}'s two subplots, to assess whether each reconstruction preserves the conserved energy of the scattering system, we compare the energy of the inferred initial state with the energy of the target final state. The empirical cumulative distribution of the log-scaled Relative Conservation Error summarizes this comparison over the full test set. Curves that rise more rapidly toward one at smaller error values indicate that a larger fraction of samples have low energy mismatch. Since the ground-truth trajectories are produced by numerical integration, their conservation error is not exactly zero because of finite-step and floating-point errors. We observe that CBi-CFM preserves energy at a level nearly comparable to this numerical ground-truth error. Bi-CFM achieves the next best performance, followed by Backbone, whereas Random exhibits the largest deviation. On the other hand, the calibration plots provide a complementary sample-wise view. Each point corresponds to one test trajectory, with the horizontal axis showing the target final-state energy and the vertical axis showing the inferred initial-state energy. The dashed diagonal denotes perfect energy agreement. It can be seen that CBi-CFM still performs comparably to the ground truth. The points of CBi-CFM lie closest to the diagonal, followed by Bi-CFM and then Backbone, whereas the points of Random are barely aligned with the diagonal.

Next, in Fig. \ref{fig:planet}\textbf{d}, we visualize the distributions of representative features of the inferred initial states for the first body under different methods. We separately consider the distributions over all trajectories and over trajectories that contain scattering events. First, the ground truth distributions for all trajectories and those containing scattering differ across several features, which are selected to be presented in the figure. We observe that CBi-CFM, Bi-CFM, and Backbone all distinguish between these two types of trajectories. Second, for each distribution, the closest match to the target distribution is achieved by either Bi-CFM or CBi-CFM, demonstrating their superior ability to capture the dynamics over Backbone. In particular, for the important feature eccentricity, although all methods struggle to fit the distribution, the distribution generated by Backbone is noticeably narrower than Bi-CFM and CBi-CFM over both types of trajectories, revealing the limitation of deterministic models.

Finally, in Fig. \ref{fig:planet}\textbf{e}, we examine whether CBi-CFM can correctly infer the corresponding interval of the initial semi-major axis based on the final state. The results of Bi-CFM and Backbone are recorded in Fig. C10 of Supplementary Note \textcolor{blue}{C}. Specifically, we divide all trajectories into three groups according to their initial semi-major axis: small ($a_0<0.05$), medium ($0.1<a_0<0.2$), and large ($a_0>0.25$). The results demonstrate that CBi-CFM infers the correct intervals of $a_0$, and the inferred distributions closely match the ground truth, which shows that CBi-CFM can capture the causal relationships within the dynamical system.

\subsection{Evaluation on Real-world Million-body Globular Cluster Systems}\label{sec:gc}

To test Bi-CFM in a real observational setting, we apply it to the inverse problem of long-term globular-cluster (GC) evolution. Milky Way GCs are ancient, dense, self-gravitating stellar systems, typically containing $10^4$--$10^6$ stars and evolving over timescales of order $\sim 10~\mathrm{Gyr}$ or longer \citep{Harris1996CatalogGC,Heggie2003GravitationalMB,VandenBerg2013AgesGC}. As tracers of the early assembly history of the Milky Way, they provide a natural testbed for Galactic archaeology \citep{Massari2019OriginGC}. Given the observed surface brightness profiles (SBPs), velocity dispersion profiles (VDPs), and ages of present-day clusters, the goal is to infer initial dynamical states whose forward evolution reproduces the observations.

This inverse problem is challenging because $\sim 10$ Gyr of stellar evolution, two-body relaxation, mass segregation, tidal stripping, binary interactions, and possible core collapse can erase or strongly reshape the memory of the birth state, leaving a degenerate mapping from initial cluster parameters to present-day observables \citep{VesperiniHeggie1997IMF,Baumgardt2003DynamicalEvolution,Giersz2011MonteCarlo47Tuc,Pijloo2015InitialConditions}. Traditional direct and Monte Carlo simulations have been central tools for modelling GC evolution, but brute-force inverse inference over large initial-condition spaces remains computationally prohibitive \citep{Henon1971MCModel,Giersz2013MOCCA,Rodriguez2016CMC,Kremer2020CMCCatalog}. This setting therefore provides a stringent real-world test for a distributional, simulation-trained approach such as Bi-CFM. Additional astrophysical background and computational details are provided in Supplementary Note \textcolor{blue}{A}.

\begin{figure}[t]
    \centering
    \includegraphics[width=0.94\linewidth, trim=14 10 6 0, clip]{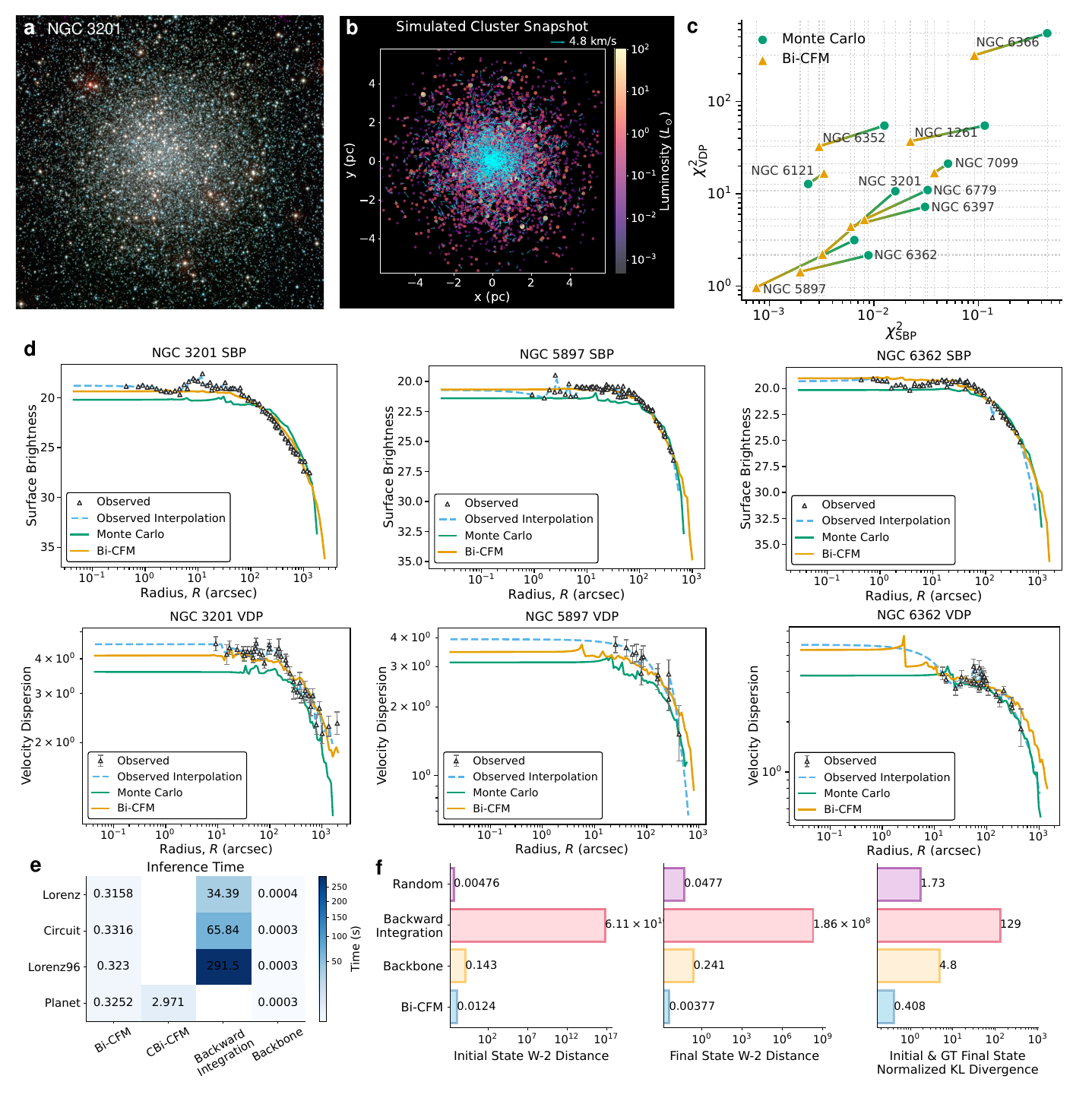}  
    \caption{\textbf{Observational data, simulation data, evaluation metrics, and profiles on million-body globular cluster systems, and ablation analysis of inference time and robustness.} 
    \textbf{a.} The visualization of observational data.
    \textbf{b.} The visualizations of simulation data.
    \textbf{c.} SBP and VDP errors of the Monte Carlo method and Bi-CFM.
    \textbf{d.} Surface brightness profiles and velocity dispersion profiles of observational data, Monte Carlo method, and Bi-CFM.
    \textbf{e.} Inference time of different methods on different systems.
    \textbf{f.} Evaluation metrics of different methods on noisy data of the Lorenz system.}
    \label{fig:cluster}
\end{figure}

\textbf{Experiment setups.} 
We consider a set of observed Milky Way GCs identified by their New General Catalogue (NGC) designations. We show the observational data of NGC 3201 in Fig. \ref{fig:cluster}\textbf{a}. The observation data are constructed from their present-day structural and kinematic properties reported in the literature, which encode information about the long-term dynamical evolution. In this work, the observational constraints are based primarily on projected SBPs, VDPs, and ages. 
Together, these observables provide complementary constraints on each cluster: the SBP traces the projected distribution of stellar light, the VDP constrains the internal kinematics and dynamical mass distribution, and the age sets the evolutionary time at which the simulated cluster should be compared with the observed system. More details of the data are provided in Section \ref{sec:exp_setting}.

\textbf{Baselines and metrics.} Based on these fundamental experimental settings, we compare our method with the state-of-the-art Monte Carlo method \citep{Rui2021MatchGC}. Since the simulations are performed with $N=4\times10^{5}$ particles, we follow the inference in \citet{Rui2021MatchGC} and conduct the baseline comparison on the ten clusters whose initial particle numbers are most likely to be $N=4\times10^{5}$. 
To quantify the agreement between simulated and observed globular-cluster profiles, we compare the simulated and observed profiles using uncertainty-weighted $\chi^2$ statistics for the SBP and VDP following previous works \citep{2008MNRASHeggie, 2014MNRASHeggie, Rui2021MatchGC}. 
These quantities are reported as the SBP and VDP profile errors (See details in Section \ref{sec.metric}).

\textbf{Results.} Overall, we perform simulations from the inferred initial states of different methods, and compare the resulting final states with the observational data using $\chi^2$ errors. For some simulations, the cluster disrupts before reaching the age reported by the literature. In such cases, we use the final simulated state to compute the error.

The observed SBPs and VDPs are measured at cluster-dependent radii, whereas the simulation profiles are represented on fixed radial grids. We therefore interpolate the observational profiles and evaluate them on the same radial grids as the training data, so that the observed clusters can be used as inputs to the model. However, the observed radii are often concentrated within limited radial ranges and do not uniformly cover the full simulation grid. As a result, interpolation can be unreliable in sparsely observed regions, which may introduce inaccurate profile values into the inference procedure. To reduce the influence of poorly constrained radial regions, we construct three radial-coverage variants of the training data. Together with two choices of simulation data sources, this gives six training datasets in total. For each cluster, we report the best-performing Bi-CFM result among the corresponding six trained models. We find that the half-grid variants generally achieve the best overall performance, likely because this radial range better matches the region where the observed radii are most densely sampled (See more details in Section \ref{sec:exp_setting}).

We summarize the results of the baseline and Bi-CFM on the clusters in Fig. \ref{fig:cluster}\textbf{c} with the results for the same cluster connected by line segments. Points closer to the lower-left corner indicate smaller errors in both SBP and VDP. It can be seen that Bi-CFM generally outperforms the baseline Monte Carlo method, except for one cluster where the two methods achieve comparable performance.
To provide a more detailed comparison of the performance, in Fig. \ref{fig:cluster}\textbf{d}, we visualize the profiles of three clusters, including NGC 3201, which corresponds to Fig. \ref{fig:cluster}\textbf{a}, as well as NGC 5897 and NGC 6362. The triangles denote the observational data points, and the vertical intervals indicate their uncertainties. The dashed curves represent the profiles obtained by linear interpolation of the observational data points. The green and yellow curves correspond to the final states evolved from the initial states inferred by the Monte Carlo method and Bi-CFM, respectively. As shown in the figure, the results obtained by Bi-CFM fit the observational data points more closely and better agree with the profiles obtained by linear interpolation.
We can observe that Bi-CFM infers physically more consistent initial conditions whose long-term evolution better matches the observations. Because, unlike the Monte Carlo method, which searches over the initial-condition space through a finite set of forward simulations, Bi-CFM learns the distribution-level physical correspondence between initial and final states. Its success on this long-timescale problem further suggests that the proposed end-to-end bidirectional design effectively mitigates the error accumulation inherent to chaotic dynamics.
This experiment demonstrates that our proposed method can serve as a new tool for inverse problems in GCs, a large-scale, chaotic, and long-timescale system, outperforming traditional approaches.
 
\subsection{Analysis of Inference Time and Robustness}
\label{sec:ablation}

\textbf{Inference time.} In Fig. \ref{fig:cluster}\textbf{e}, we record different methods' inference times for one sample. The horizon $T$ of the Lorenz, Circuit, and Lorenz 96 systems is 3, 1.5, and 7, respectively. Nevertheless, the inference time of Bi-CFM on the GC task is comparable to that observed in the other systems. The times are calculated as the average of 50 batches with batch size $=1$ after warm-up. The results of deep-learning methods are conducted on GPUs, while the Backward Integration is on CPUs because the implementation is not applicable on GPUs. We note that the development of traditional numerical solvers on GPUs can accelerate the process of Backward Implementation in the future. From Fig. \ref{fig:cluster}\textbf{e}, we observe that compared with Backward Integration, CFM achieves a speedup of more than two orders of magnitude. But it is slower than Backbone because the sampling process takes several sampling steps. Also, the time of Backward Integration increases a lot as the number of dimensions increases, while the deep learning methods' inference times are almost consistent. This demonstrates deep learning methods' potential to deal with high-dimensional problems. Additionally, CBi-CFM takes a longer time compared with Bi-CFM, which is mainly because the sampling from the prior distribution involves random walks and the projection, which need additional matrix multiplications.

\textbf{Robustness evaluation.} On the Lorenz system, we further evaluate the robustness of different methods under noisy observations. Specifically, to mimic uncertainty in the observed final states, we add Gaussian noise with a scale of $0.1$ to the target final states at $T=2.4$ and re-evaluate all distribution-level metrics. Three representative metrics are reported in Fig.~\ref{fig:cluster}\textbf{f}, while the remaining two metrics are provided in Supplementary Note~\textcolor{blue}{C}. The results show that Bi-CFM remains robust under noisy targets and consistently achieves the best performance among all baselines. In contrast, Backward Integration exhibits the largest performance degradation after noise is added, indicating its strong sensitivity to perturbations in the final state. This is consistent with the nature of chaotic inverse problems, where small observational errors can be amplified during reverse-time integration, whereas the end-to-end distributional modelling of Bi-CFM mitigates such error accumulation.

\section{Discussion} \label{sec.diss}

In this work, we propose Bi-CFM, a probabilistic and physically grounded approach for solving the inverse problems of chaotic systems. The extensive experiments demonstrate its effectiveness and reveal the potential of deep generative modeling in tackling the ill-posed inverse problems.
On the one hand, Bi-CFM tackles one of the central challenges in chaotic systems, the exponential accumulation of errors over time, through end-to-end learning and bidirectional modelling. On the other hand, its probabilistic formulation captures the stochasticity and non-uniqueness of chaotic evolution. Its conservation-constrained extension, CBi-CFM, further incorporates a key physical principle by enforcing conserved quantities during inverse modelling. In addition, as the system dimension increases, the inference time of the deep learning-based methods remains nearly unchanged, suggesting their potential to mitigate the curse of dimensionality \citep{bellman1966dynamic}. Together, Bi-CFM and CBi-CFM provide a potential route to high-dimensional, long-timescale inverse problems in Galactic archaeology, planetary dynamics, and other chaotic physical systems.

In a series of systematic experiments, we observe several interesting phenomena that could provide insights for future work. Firstly, we found that some initial states $\mathbf{u}_0$ generated by Backward Integration and their corresponding ground truth $\mathbf{u}_0^\text{gt}$ from the dataset differ significantly in magnitude, yet the final states obtained through forward evolution $g$ are close. This reflects the ill-posedness and multi-solvability of inverse problems and suggests that there are reasonable solutions far beyond the data distribution, which could lead to important scientific discoveries. However, to enable deep learning methods to uncover these solutions, enhancing generalization is necessary, which can be achieved by incorporating physical constraints and priors.
Secondly, we observe a degradation when conservation constraints are imposed in CBi-CFM. This tradeoff reflects the inherent tension between enforcing physical laws and maintaining representational flexibility, particularly in high-dimensional chaotic systems. We anticipate that future theoretical developments will offer understandings and solutions to this phenomenon. Also, future work may explore adaptive or learnable conservation constraints, where the strength of the physical constraint can be dynamically adjusted during training. Such an approach could balance physical consistency and expressiveness according to the requirements and specific settings.

In addition, there are several possible improvements worth studying in future work. Firstly, we find that Bi-CFM maintains an advantage in modeling the initial-state distribution compared with the final-state, which can be attributed to its direct optimization objective focusing on the initial distribution. A potential improvement could be achieved by introducing additional constraints on the final states during training, for example, by evolving the inferred initial states through the learned dynamics. Secondly, although Bi-CFM achieves a hundredfold acceleration over traditional backward-integration-based numerical solvers, it is still slower than the base model Backbone. This is mainly because the sampling process in CFM involves multiple iterative steps. However, this limitation can be alleviated by adopting improved flow matching approaches, such as one-step methods, such as Short-cut Models \cite{fransone} or MeanFlow \cite{meanflow}. Another possible solution is to distill the current model into a smaller and more efficient version \cite{song2023consistency, meng2023distillation}, thereby reducing the sampling steps and accelerating the inference of each sampling step. Finally, another interesting research direction is to provide explicit confidence estimates for the inferred states. In future work, we can compute the probability estimates by solving the probability-flow equation with numerical solvers. This enhancement could make the model more interpretable and reliable. In the future, equipping generative frameworks with confidence estimates may unlock potential for applying generative learning to chaotic physical systems. We aim to systematically investigate these directions in our future study.

\section{Methods} \label{sec.meth}

\subsection{Conditional Flow Matching}\label{sec.cfm}

Conditional Flow Matching (CFM) \cite{lipmanflow, liuflow} has emerged as a state-of-the-art framework for generative modeling. To sample from the \emph{target} probability distribution, the CFM framework constructs a probability ``flow'' path $p_{\tau}$ that starts from an easy-to-sample \emph{source} distribution $p_0$ (\textit{e.g.}, a Gaussian distribution) at $\tau=0$, and ends at the target distribution $p_1$ at $\tau=1$. 
The goal of CFM is to learn the velocity field $v^{\tau}$ along this flow path, so that samples $\mathbf{x}^0$ drawn from $p_0$ can be transformed into samples $\mathbf{x}^1$ from the target distribution by integrating the following ODE
\begin{align}\label{eq:sampling}
    \frac{d\mathbf{x}^\tau}{d\tau} = v^\tau(\mathbf{x}^\tau)
\end{align}
from $\tau=0$ to $\tau=1$.
Consequently, the model $v_\theta(\tau,\mathbf{x}^\tau)$ needs to regress to the velocity using the \emph{marginal} loss 
\begin{equation*}
    \mathcal{L}_{\text{marginal}} \coloneqq \mathbb{E}_{\tau\sim p(\tau),\mathbf{x}^\tau\sim p_\tau(\mathbf{x}^\tau)}\left[\| v_\theta(\tau,\mathbf{x}^\tau) - v^\tau(\mathbf{x}^\tau) \|_2^2\right].
\end{equation*}

This marginal velocity field in the above equation is the expectation of conditional velocities $v^\tau(\mathbf{x}^\tau|\mathbf{z})$ as 
\begin{equation*}
v^\tau(\mathbf{x}^\tau)=\mathbb{E}_{\mathbf{z}\sim p(\mathbf{z}|\mathbf{x}^\tau)}[v^\tau(\mathbf{x}^\tau|\mathbf{z})],
\end{equation*}
where $\mathbf{z}$ is the condition and $p_\tau(\mathbf{x}^\tau|\mathbf{z})$ is a conditional probability path.
The conditional probability path is typically much simpler than the marginal one $p_t(\mathbf{x}_t)$, because $\mathbf{z}$ usually includes the start or end point of the path. Conditioned on the known $\mathbf{z}$, one can construct conditional probability paths without high-dimensional integration introduced by the expectation operation. As a result, compared with the marginal one, the conditional velocity $v_{t}(\mathbf{x}_t|\mathbf{z})$ is tractable, which leads to the tractable \emph{conditional} training loss 
\begin{equation*}
\mathcal{L}_{\text{conditional}} \coloneqq \mathbb{E}_{\tau\sim p(\tau),\mathbf{x}^\tau\sim p_\tau(\mathbf{x}^\tau), \mathbf{z}\sim p(\mathbf{z}|\mathbf{x}^\tau)}\left[\| v_\theta(\tau,\mathbf{x}^\tau) - v^\tau(\mathbf{x}^\tau|\mathbf{z})\|_2^2\right].
\end{equation*}
Additionally, this conditional loss is proven to be equivalent to the \emph{marginal} training loss when used to optimize the model parameters $\theta$ \cite{lipmanflow, liuflow}, \emph{i.e.},$\nabla_\theta \mathcal{L}_{\text{conditional}} = \nabla_\theta \mathcal{L}_{\text{marginal}}$, so we can take the conditional loss as the training objective.
Compared to other flow-based generative models, such as the normalizing flow \cite{rezende2015variational}, the conditional loss grants CFM the effectiveness of training and scalability to complex probability distributions and large datasets. Consequently, we adopt it to model the distribution of chaotic systems.

In this paper, we choose $p_0=\mathcal{N}(0;I)$ and $\mathbf{z}=(\mathbf{x}^0,\mathbf{x}^1)$ pair. In this case, the constructed conditional probability path is degraded into effectively an interpolation between two samples: $\mathbf{x}^\tau = \alpha(\tau) \mathbf{x}^0 + \beta(\tau) \mathbf{x}^1 + \eta(\tau) \boldsymbol{\epsilon}$ where $\alpha(\tau), \beta(\tau)$ are the time-dependent interpolation coefficients, while $\eta(\tau)$ is a small random noise scale to maintain stochasticity. The source and target samples can be coupled in this case, allowing more flexible flow mappings. %
As for the sampling process, since the sampling in CFM involves simulating an ODE (Eq. \ref{eq:sampling}), different numerical solvers can be used. In this study, we adopt the Dormand–Prince (RKDP) method \cite{DORMAND198019} with 100 steps.

\subsection{Bidirectional Modeling}

The bidirectional modeling strategy is an approach we introduce to learn both forward and reverse dynamics of chaotic systems jointly, which is motivated by the error accumulation in chaotic systems. Empirically, in the experiments, we observe that although the model can accurately approximate the initial state distribution $p(\mathbf{u}_0^\text{gt})\ (\mathbf{u}_0^\text{gt}\in \mathbb{R}^n)$, the inferred initial states $\mathbf{u}_0$ often exhibit amplified errors when evolved to $g(\mathbf{u}_0, T)$, resulting in large deviations in the final states. The bidirectional formulation mitigates this issue by improving consistency between forward and reverse dynamics, thereby enabling the model to learn a more complete and physically consistent representation of the underlying chaotic dynamics. 

Specifically, our goal is to learn both the reverse dynamics represented by the conditional distribution \( p(\mathbf{u}_0^\text{gt} \mid \mathbf{u}_T^\text{gt}) \) and the forward dynamics corresponding to \( p(\mathbf{u}_T^\text{gt} \mid \mathbf{u}_0^\text{gt}) \). In practice, we let the model learn to generate pairs $(\mathbf{u}_T^{\text{gt}}, \mathbf{u}_0^{\text{gt}})$ in the space $\mathbb{R}^{2n}$. During training, we randomly select a portion of sample pairs where the positions of $\mathbf{u}_T^{\text{gt}}$ are used as conditions, \textit{i.e.}, noise-free variables, while the corresponding $\mathbf{u}_0^{\text{gt}}$ positions are still trained to learn how to denoise from Gaussian noise towards the ground truth $\mathbf{u}_0^{\text{gt}}$ with conditioned $v_\theta(\tau, \cdot, \mathbf{u}_T^{\text{gt}})$. The corresponding time pair of these samples is $(\tau_1, \tau_2)=(\tau, 1)$, where $\tau_2=1$ denotes that the $\mathbf{u}_T^{\text{gt}}$ position is noise-free. For the remaining samples, the roles are reversed: $\mathbf{u}_0^{\text{gt}}$ serves as the condition, and $\mathbf{u}_T^{\text{gt}}$ is trained to denoise with $v_\theta(\tau, \mathbf{u}_0^{\text{gt}}, \cdot)$.  The corresponding time pair of these samples is $(\tau_1, \tau_2)=(1, \tau)$. The details of the conditional CFM and the implementation of bidirectional training are reported in Supplementary Note \textcolor{blue}{B}.

During sampling, our target is to sample from $p(\mathbf{u}_0 \mid \mathbf{u}_T^\text{gt})$. Thus, we set $\tau_2=1$ and $\tau_1$ gradually increasing from 0 to 1 with probability flowing from $p(\mathcal{N}(0; I) \mid \mathbf{u}_T^\text{gt})$ to $p(\mathbf{u}_0 \mid \mathbf{u}_T^\text{gt})$. Through this bidirectional sampling and training scheme, the model jointly learns both forward and reverse dynamics, enabling a consistent representation of the system's evolution across directions.

\subsection{Conservation Law Constraint} \label{sec.conserve}

Conservation laws play an important role in physics by constraining the evolution of a system and ensuring the consistency of its dynamics. According to Noether's theorem \cite{noether1983invariante}, every continuous symmetry of a physical system corresponds to a conserved quantity: translational symmetry leads to conservation of momentum, rotational symmetry to conservation of angular momentum, and time invariance to conservation of energy. These conservation laws not only simplify complex dynamics but also encode causal relationships between symmetry and physical invariants. Enforcing such constraints in data-driven models guides the learned dynamics to remain physically consistent. As a result, we propose an approach to constrain Bi-CFM to follow the conservation laws, which can be applied to any system in which the conserved quantities can be computed from the observed states.

To infer an initial state \(\mathbf{u}_0\) that satisfies the conservation law, we constrain \(p(\mathbf{u}_0)\) to lie on the \textit{conservation manifold}, the data manifold constrained by the conservation law. This is achieved by imposing conservation constraints on both the prior distribution and the velocity field as illustrated in Fig. \ref{fig:fig1}\textbf{b}. We first constrain the prior distribution to lie within the conservation-constrained manifold, and then restrict the velocity field to remain tangent to this manifold. In doing so, the entire probability flow is constrained to evolve within the conservation manifold.

Mathematically, suppose the system admits a conserved quantity \( H \). For example, in an $N$-body planet-star system, the total energy is conserved and can be written as
\[
H = -\sum_{i=1}^{N} \frac{G M m_j}{2a_i},
\]
where \( m_i \) and \( a_i \) denote the mass and semi-major axis of the \( i \)-th body, respectively, and \( M \) is the mass of the central star. 
For a trajectory \((\mathbf{u}_0, \mathbf{u}_1, \ldots, \mathbf{u}_T)\), the conservation law requires that $H(\mathbf{u}_t)$ remains constant for all \( t = 0, \ldots, T \). 
Therefore, for a given target final state \(\mathbf{u}_T^\text{gt}\), our goal is to infer an initial state \(\mathbf{u}_0\) such that $H(\mathbf{u}_0) = H(\mathbf{u}_T^\text{gt})$, ensuring that the inferred initial condition lies on the same conservation manifold as the target state. Since our objective is to ensure that the generated \(\mathbf{u}_0\) satisfies the conservation law, all the following operations are applied to the \(\mathbf{u}_0\) component within the bidirectional modeling framework. In the following descriptions, we also omit the notation denoting the condition \(\mathbf{u}_T^\text{gt}\) for brevity.

\textbf{Conservation-constrained prior distribution.} To sample within the conservation manifold of $\mathbf{u}$ defined by \( H(\mathbf{u}) = H(\mathbf{u}_T) \), we perform random walks along the tangent space of the conservation manifold starting from \(\mathbf{u}_T\), and finally apply the projection correction to mitigate numerical errors. 
Specifically, we initialize \(\mathbf{u} = \mathbf{u}_T^\text{gt}\) and iteratively update as follows: 
at each step, we sample a random vector \(\mathbf{r}\) from the Gaussian distribution, project it onto the tangent space of the conservation manifold via \(\mathbf{r} \cdot P(\mathbf{u})\), and update the current state by $\mathbf{u}' = \mathbf{u} + \mathbf{r}\cdot P(\mathbf{u}) \Delta r$, where \(\Delta r\) denotes the step size. 
The projection matrix \(P(\mathbf{u})\) associated with the conserved quantity \(H\) is defined as
\begin{equation}
    P(\mathbf{u}) = I - \frac{\nabla H(\mathbf{u}) \nabla H(\mathbf{u})^\top}{\|\nabla H(\mathbf{u})\|^2}.
    \label{eq:proj}
\end{equation}
We provide a proposition to guarantee that $\mathbf{r} \cdot P(\mathbf{u})$ is in the tangent space of the conservation manifold, with its proof included in Supplementary Note \textcolor{blue}{D}.
\begin{proposition}
Let $H:\mathbb{R}^n\to\mathbb{R}$ be continuously differentiable, and let the conservation manifold be defined as
\[
\mathcal{M} = \{\mathbf{u}\in\mathbb{R}^n : H(\mathbf{u}) = H(\mathbf{u}_T^\text{gt})\},
\]
with $\nabla H(\mathbf{u}) \neq \mathbf{0}$. With $P$ defined as Eq.~\ref{eq:proj}, for any $\mathbf{r}\in\mathbb{R}^n$, the vector $\mathbf{r}\cdot P(\mathbf{u})$ lies in the tangent space $T_{\mathbf{u}}\mathcal{M}$ of the manifold at $\mathbf{u}$.
\label{theo:conservation}
\end{proposition}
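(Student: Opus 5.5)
The plan is to identify the tangent space via the regular value theorem and then check orthogonality to the gradient directly. Since $H$ is $C^1$ and $\nabla H(\mathbf{u})\neq\mathbf{0}$ at the point $\mathbf{u}\in\mathcal{M}$ under consideration, $\nabla H$ is nonzero on a neighbourhood of $\mathbf{u}$ by continuity. The implicit function theorem (regular level set theorem) then shows that, near $\mathbf{u}$, $\mathcal{M}$ is a $C^1$ embedded hypersurface of dimension $n-1$, and its tangent space is
\[
T_{\mathbf{u}}\mathcal{M}=\ker dH(\mathbf{u})=\{\mathbf{v}\in\mathbb{R}^n:\nabla H(\mathbf{u})^\top\mathbf{v}=0\}.
\]
I would state this characterization briefly. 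The inclusion $T_{\mathbf{u}}\mathcal{M}\subseteq\ker dH(\mathbf{u})$ follows by differentiating $H(\gamma(s))=\text{const}$ along any $C^1$ curve $\gamma$ in $\mathcal{M}$ through $\mathbf{u}$. Equality then follows from the dimension count $n-1$ supplied by the implicit function theorem.

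With this description in hand, the claim reduces to a computation. Write $\mathbf{g}=\nabla H(\mathbf{u})$. Because $P(\mathbf{u})=I-\mathbf{g}\mathbf{g}^\top/\|\mathbf{g}\|^2$ is symmetric, the row-vector convention $\mathbf{r}\cdot P(\mathbf{u})$ used in the paper, i.e.\ $\mathbf{r}^\top P$, is the transpose of $P\mathbf{r}$. So it suffices to show $\mathbf{g}^\top P\mathbf{r}=0$. Expanding gives
\[
\mathbf{g}^\top P\mathbf{r}=\mathbf{g}^\top\mathbf{r}-\frac{(\mathbf{g}^\top\mathbf{g})(\mathbf{g}^\top\mathbf{r})}{\|\mathbf{g}\|^2}=0.
\]
Hence $P\mathbf{r}\in\ker dH(\mathbf{u})=T_{\mathbf{u}}\mathcal{M}$, for every $\mathbf{r}\in\mathbb{R}^n$.

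For completeness I would also note that $P$ is idempotent and its image is exactly $\mathbf{g}^\perp$, so $P$ is the orthogonal projection onto the tangent space. This is not required for the statement.

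The only delicate point is the first step: justifying that $\mathcal{M}$ is a manifold with tangent space equal to $\mathbf{g}^\perp$. The hypothesis $\nabla H(\mathbf{u})\neq\mathbf{0}$ is exactly what the regular value theorem needs, applied locally at $\mathbf{u}$. Everything after that is the one-line computation above.
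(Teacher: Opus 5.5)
Your proposal is correct and follows the same route as the paper: identify $T_{\mathbf{u}}\mathcal{M}$ with the orthogonal complement of $\nabla H(\mathbf{u})$, then verify $\nabla H(\mathbf{u})^\top P(\mathbf{u})\mathbf{r}=0$ by direct expansion. The paper simply asserts the tangent-space description and is loose about row versus column vectors. Your regular-level-set justification and your use of the symmetry of $P$ therefore add rigor to the same argument rather than changing it.
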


Although the proposition guarantees that \(\mathbf{r}\cdot P(\mathbf{u})\) lies in the tangent space, we observe a tradeoff between accuracy and efficiency when updating with $\mathbf{u}' = \mathbf{u} + \mathbf{r}\Delta r$.
If \(\Delta r\) is large, the update introduces a nonnegligible numerical error that pushes \(\mathbf{u}\) away from the conservation manifold, and the deviation accumulates over iterations. If \(\Delta r\) is too small, the random walk remains confined to a small neighborhood around \(\mathbf{u}_T\), which limits the diversity of the prior distribution and the expressiveness of the algorithm. As a result, we mitigate this tradeoff by combining the projection correction. The projection step is also iterative. At each iteration, the goal is to correct the current state \(\mathbf{u}\) so that its conserved quantity \(H(\mathbf{u})\) moves closer to the target value \(H(\mathbf{u}_T)\). This correction is performed along the gradient direction of \(H\), leading to the following update rule:
\begin{align*}
    \mathbf{u}' = \mathbf{u} - \frac{H(\mathbf{u}) - H(\mathbf{u}_T)}{\|\nabla H(\mathbf{u})\|^2} \nabla H(\mathbf{u}).
\end{align*}
Each iteration reduces the deviation $\mid H(\mathbf{u}') - H(\mathbf{u}_T)\mid$
and gradually brings $\mathbf{u}$ back to the conservation manifold. 
In practice, a few iterations are typically sufficient to achieve a reduction in numerical error.

\textbf{Conservation-constrained velocity field.} To ensure that the learned flow remains within the conservation manifold, we further constrain the model-predicted velocity field to lie in the tangent space of the conservation manifold. 
Specifically, this is achieved by projecting the velocity field onto the tangent space. 
In both the training and sampling process, we multiply the velocity field with the projection matrix \( P(\mathbf{u}) \) in Eq. \ref{eq:proj}. Theoretical justification for this operation is provided by Proposition \ref{theo:conservation}, which guarantees that \( v_\theta \cdot P(\mathbf{u}) \) always resides within the tangent space of the conservation manifold. Additionally, during training, the loss $\mathcal{L}_\text{conditional}$ of CFM is defined on conditional velocities $v_t(\mathbf{u}\mid\mathbf{z})$, so we can only enforce the constraint on these conditional velocities. We point out that the marginal velocity, as the expectation of conditional velocities, remains in the tangent space as well, since both the expectation operator and the tangent space are linear.

\subsection{Evaluation Metrics}\label{sec.metric}

As described in Section~\ref{sec.bicfm}, we evaluate the performance of different methods using distribution-level distances. Specifically, we employ two classic and widely used metrics: the Wasserstein distance and the KL divergence. Additionally, for the globular cluster systems, we use $\chi^2$ statistics for SBPs and VDPs as evaluation metrics.

\textbf{Wasserstein distance.} Formally, for two probability distributions \( q_1 \) and \( q_2 \) defined on a metric space with cost function \( d(\cdot, \cdot) \), the Wasserstein distance (W-$p$ distance) is defined as:
\begin{equation*}
    W_p(q_1, q_2) = \left( \inf_{\gamma \in \Pi(q_1, q_2)} \mathbb{E}_{(x, y) \sim \gamma} [ \frac{1}{p} d(x, y)^p ] \right)^{1/p},
\end{equation*}
where \( \Pi(q_1, q_2) \) denotes the set of all joint distributions with marginals \( q_1 \) and \( q_2 \). In our implementation, we adopt the case of \(p = 2\), corresponding to the squared Euclidean cost \(d(x, y) = \|x - y\|_2\). The Wasserstein distance measures the minimal cost of transporting one probability distribution to another, providing a meaningful metric that captures both the geometry and global structure of distributions. 

For numerical stability and efficiency, we compute the W-2 distance using an entropy-regularized approximation, namely the Sinkhorn divergence \cite{cuturi2013sinkhorn}, with the package \texttt{GeomLoss} \cite{feydy2019interpolating}. Sinkhorn divergence adds an entropic regularization term to the original optimal transport problem, making it convex and efficiently solvable via the Sinkhorn-Knopp algorithm \cite{sinkhorn1964relationship}. In \texttt{GeomLoss}, the entropy regularization term is controlled by the parameter \texttt{blur}. In our experiments, we set \texttt{blur} \(= 0.01\), which provides a stable and differentiable estimation of the W-2 distance.

\textbf{KL divergence.} The KL divergence quantifies the relative entropy between two distributions, reflecting how much information is lost when \( q \) is used to approximate \( p \). It is defined as:
\begin{equation}
    D_{\mathrm{KL}}(p \parallel q) = \int p(x) \log \frac{p(x)}{q(x)} dx.
\end{equation}
Unlike the W-2 distance, it is asymmetric and does not satisfy the triangle inequality, but it provides a sensitive measure of local mismatches between distributions. 

We estimate the KL divergence between two empirical distributions using the $k$-nearest-neighbor-based estimator \cite{wang2009divergence}. This nonparametric method estimates local probability densities from the distances to the $k$ nearest neighbors of each sample and computes divergence directly from these density ratios. In our implementation, we set $k=5$ and use 2500 samples for each distribution pair. To quantify uncertainty, we perform 100 bootstrap resamplings of the data and recompute the divergence estimates for each trial. This procedure yields a robust, sample-efficient estimate of KL divergence that is suitable for comparing high-dimensional dynamical distributions.

\textbf{Normalization.} For the distribution-level metrics over states at different time steps, to prevent any distribution at a single time step from dominating the overall distance, we normalize the distributions of different steps by subtracting the mean and dividing by the standard deviation.

\textbf{$\chi^2$ statistics.} In globular cluster systems, we use uncertainty-weighted $\chi^2$ statistics following previous works \citep{2008MNRASHeggie, 2014MNRASHeggie, 2018ApJK, 2019ApJY, Rui2021MatchGC}. 
We estimate an effective uncertainty from the catalogue quality weights \citep{Trager1995SBP}. For the $i$-th SBP point, we define
\begin{equation}
    \delta \mu_{V,i} = \frac{\delta \mu_{V,0}}{w_i},
\end{equation}
where $w_i$ is the catalogue quality weight. The normalization is estimated from the residuals of the tabulated third-order Chebyshev fit,
\begin{equation}
    \delta \mu_{V,0}
    =
    \left[
    \frac{1}{N_{\rm SBP}}
    \sum_i w_i^2 \epsilon_{\mu,i}^2
    \right]^{1/2},
\end{equation}
where $\epsilon_{\mu,i}$ is the residual of the observed SBP point relative to the Chebyshev fit, and $N_{\rm SBP}$ is the number of valid SBP observational radial points. Points with non-positive weights are excluded. The resulting inverse-variance weights are then normalized before constructing the interpolated observational profile.

For the VDP data, we use the reported upper and lower observational uncertainties from the radial-velocity or proper-motion measurements in the Baumgardt compilation \citep{Baumgardt2018VDP,Baumgardt2019GaiaVDP}. Because the profile statistic uses a single symmetric weight per point, we define an effective uncertainty
\begin{equation}
    \delta \sigma_i =
    \frac{\delta \sigma_{i,+} + \delta \sigma_{i,-}}{2}.
\end{equation}
This gives the inverse-uncertainty weight used in the observational profile construction.

The discrepancies between a simulated profile and the corresponding observed profile are evaluated at the observed radii. The simulated profile is linearly interpolated to each observed radius, and we compute
\begin{equation}
    \chi^2_{\rm SBP}
    =
    \frac{1}{N_{\rm SBP}}
    \sum_i
    W^{\rm SBP}_i
    \left[
    \mu_{V,\rm sim}(R_i)-\mu_{V,\rm obs}(R_i)
    \right]^2,
\end{equation}
and
\begin{equation}
    \chi^2_{\rm VDP}
    =
    \frac{1}{N_{\rm VDP}}
    \sum_i
    \left[
    \frac{
    \sigma_{\rm sim}(R_i)-\sigma_{\rm obs}(R_i)
    }{
    \delta \sigma_i
    }
    \right]^2,
\end{equation}
where $W^{\rm SBP}_i=\frac{\omega_i}{\sum_j \omega_j},$ denotes the normalized SBP weight.

\subsection{Experiment Setups} \label{sec:exp_setting}

\textbf{Three classic chaotic systems.} First, we consider the Lorenz system, a canonical three-dimensional chaotic system originally introduced as a simplified model of atmospheric convection \citep{lorenz2017deterministic}. We use the standard chaotic parameter setting, under which the system exhibits the well-known butterfly-shaped strange attractor and strong sensitivity to initial conditions. Second, we study a three-dimensional autonomous chaotic system that has been used in applications such as secure communication and encryption \citep{dadras2009novel}. This system can generate multi-lobe chaotic attractors by varying a single parameter. Finally, we consider the Lorenz 96 system, a high-dimensional chaotic system widely used as a benchmark for nonlinear dynamics and atmospheric modelling \citep{lorenz1996predictability}. It captures complex interactions among multiple variables under periodic boundary conditions and is commonly used to study high-dimensional chaos. We adopt the forcing parameter $F=20$ and set the dimension to $N=10$. More details are provided in Supplementary Note \textcolor{blue}{A}.

\textbf{Three-body planetary systems.} To sample a wide range of initial conditions, we set the mass of the planets ranging from Earth-size to a few Jupiter masses, and we set the initial semi-major axis of the innermost planet to be drawn log-uniformly between 0.02 and 0.4 AU. The semi-major axes of the other two planets were then assigned to allow efficient scattering. %
We initialize the planets on nearly circular and coplanar orbits, as expected from disk-planet interactions during the early stages of system formation.

We use \texttt{REBOUND} to run the N-body scattering simulations \citep{rebound}, with general relativity correction, using the \texttt{gr-potential} option in \texttt{REBOUNDx} \cite{tamayo2020reboundx}. In addition, we assume the bodies collide by conserving mass and momentum, as in the built-in \texttt{REBOUND} \texttt{collision} routine, and we eject (or remove) the planets when they reach the escape distance of 1000 AU. Further details of the system are presented in Supplementary Note \textcolor{blue}{A}.

\textbf{Real-world million-body globular cluster systems.}

We compile observational SBPs, VDPs, and ages from standard Milky Way globular-cluster catalogues. Specifically, the observational data are collected from the observational compilation used in \citep{Rui2021MatchGC}, the SBPs are taken from the catalogue of \citet{Trager1995SBP}, which remains one of the standard large compilations for Milky Way globular cluster structure studies. 
The VDPs are adopted from the 4th version of the GC database assembled by Holger Baumgardt\footnote{https://people.smp.uq.edu.au/HolgerBaumgardt/globular/veldis.html} \citep{Baumgardt2018VDP, Baumgardt2019GaiaVDP, Vasiliev2023GaiaEDR3}, which combines large samples of line of sight velocities and Gaia proper motions to provide kinematic profiles for a substantial fraction of the Milky Way globular cluster population. 
We further use age estimates from the literature, based on colour magnitude diagrams, eclipsing binaries, horizontal branch constraints, and HST photometry \citep{Faria2002NGC6352,Correnti2018NGC6397,Ying2024NGC3201,Kaluzny2013M4,Campos2013NGC6366,VandenBerg2018M55NGC6362,AguadoAgelet2025CARMAII}. 

As for the corresponding simulation data for training, we collect simulation data from the previous study \citep{Kremer2020CMCCatalog} and further generate additional data using the same solver, as shown in Fig. \ref{fig:cluster}\textbf{b}. In detail, we perform a suite of star-cluster simulations using Cluster Monte Carlo (CMC) code\footnote{https://clustermontecarlo.github.io/CMC-COSMIC/index.html}, a state-of-the-art code for simulating collisional star-cluster dynamics \citep{Rodriguez2016CMC,Rodriguez2022CMC}. CMC follows the H\'enon-type Monte Carlo approach, in which two-body relaxation is modeled statistically by representing the cumulative effect of many weak gravitational encounters over a timestep as an effective encounter between neighboring particles \citep{Henon1971MCMethod,Rodriguez2022CMC}. 

To reduce the effect of poorly constrained radial regions, we construct three variants of the training data by evaluating each profile over different radial intervals. Specifically, for each profile, we extract its values on the original full radial grid, on a subset covering approximately one half of the radial range, and on a subset covering approximately one quarter of the radial range. 

As for the physical parameters of simulations, our choices are based on the comparison between the \texttt{CMC Cluster Catalog} \citep{Kremer2020CMCCatalog} and observations by \citet{Rui2021MatchGC}, who fit 59 Milky Way GCs using observed SBPs and VDPs and show that a substantial subset of the catalog models can reproduce present-day Galactic GCs. We therefore choose parameter ranges that broadly cover the cluster-scale properties of most Milky Way GCs, while keeping the simulation suite computationally feasible. Specifically, we fix the initial number of particles to $N=4\times10^{5}$ and vary three cluster-scale parameters: the initial virial radius $r_{\rm v}$, the Galactocentric radius $R_{\rm g}$, and the metallicity. 
All other initial conditions follow \citet{Kremer2020CMCCatalog}. Each model is evolved to 13.8 Gyr unless the cluster disrupts.

\subsection{Baseline Methods}
\label{sec:baseline}

In Section \ref{sec.bicfm} and \ref{sec:planet}, we consider three main baselines. The first is the \textit{Backward Integration} with the traditional numerical algorithm mentioned in Section \ref{sec.intro}. For deterministic, non-chaotic systems, it can infer the initial state with low numerical errors when the time step is sufficiently small. In practice, we observe that the performance first increases and then decreases as the number of time steps grows. When the number of steps is small, the model struggles to converge accurately. When it becomes too large, error accumulation dominates. In our experiments, we select the time step size that yields the best performance, and the detailed results are provided in Supplementary Note \textcolor{blue}{B}. Additionally, we note that this baseline requires explicit knowledge of the governing equations, which is not needed by Bi-CFM. The second is \textit{Backbone}, the base neural architecture of the CFM algorithm. The input of it is the target final states, while the output is the inferred initial states. The third baseline is selecting initial states from the ground truth dataset randomly, which we refer to as \textit{Random} hereafter. The distribution of initial states $\mathbf{u}_0$ obtained from this baseline and the distribution of the corresponding $g(\mathbf{u}_0, T)$ is the same as the ground truth $p(\mathbf{u}_0^\text{gt})$ and $p(\mathbf{u}_T^\text{gt})$, respectively. However, it fails to capture the causal relationships between states at different time points. We report more details of baseline implementation in Supplementary Note \textcolor{blue}{B}.

In Section \ref{sec:gc}, we consider the state-of-the-art Monte Carlo approach as a baseline \cite{Rui2021MatchGC}. The method infers the initial conditions of an observed cluster by searching over a set of forward-evolved cluster models. Each candidate is initialized with a prescribed set of physical parameters and evolved to old ages using cluster simulations. Its projected SBPs and VDPs are then computed and compared with the corresponding observational profiles. The candidate whose final snapshot yields the smallest profile discrepancy is taken as the best-matching evolutionary analogue, and its initial parameters are adopted as the inferred initial conditions of the observed cluster.

\section{Acknowledgements}
We gratefully acknowledge the support of Westlake University Research Center for Industries of the Future; Westlake University Center for High-performance Computing. The content is solely the responsibility of the authors and does not necessarily represent the official views of the funding entities.

\begin{appendices}

\section{Detailed Settings of Systems in Experiments}

We herein introduce the detailed settings of the systems on which we conduct experiments.

\subsection{Lorenz System}

The Lorenz system is a classic example of chaotic dynamical systems, originally studied as a simplified model for atmospheric convection \cite{lorenz2017deterministic}. It consists of a system of three coupled, nonlinear ordinary differential equations (ODEs):
\begin{equation*}
\begin{cases}
\dot{x} = \sigma (y - x), \\
\dot{y} = x (\rho - z) - y, \\
\dot{z} = x y - \beta z,
\end{cases}
\end{equation*}
where \( x(t), y(t), z(t) \) are the state variables, and $\sigma$, $\rho$, and $\beta$ are system parameters controlling the rate of diffusion, buoyancy, and dissipation, respectively. A commonly studied setting is $\sigma = 10$, $\rho = 28$, and $\beta = 8/3$. Under these parameters, the system exhibits the well-known butterfly-shaped strange attractor, characterized by sensitive dependence on initial conditions and a positive largest Lyapunov exponent.

During data generation in this experiment, we generate $50000$ trajectories in total. Among them, 45000 trajectories are used for training and 5000 for testing. We sample initial conditions from the uniform distribution on $[-1,1] \times [0,2] \times [-1,1]$, \textit{i.e.}, $\mathcal{U}([-1,1] \times [0,2] \times [-1,1])$. We take the Runge–Kutta method (RK45) \cite{dormand1980family} with $\Delta t=10^{-5}$ to solve the equation numerically with the package \texttt{scipy} \cite{virtanen2020scipy}. The visualization of the trajectory's $(x,y)$ value is provided in Fig. \ref{fig:lorenz_vis}, which shows the classical two-lobe attractor.

\begin{figure}
    \centering
    \includegraphics[width=0.5\linewidth]{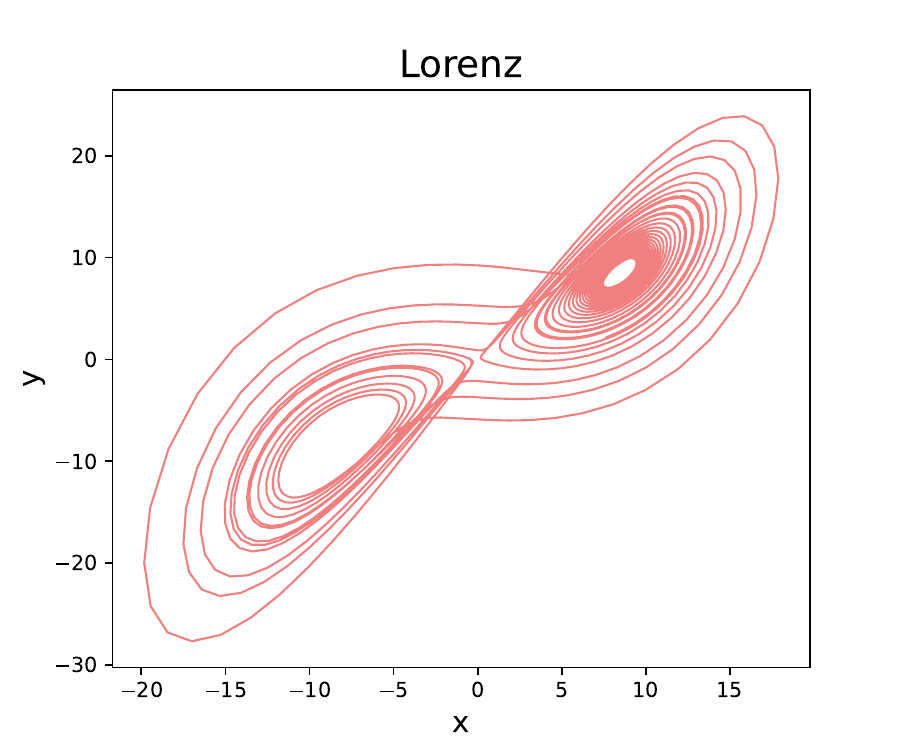}  
    \caption{\textbf{Visualization of one trajectory in the Lorenz system dataset.}}
    \label{fig:lorenz_vis}
\end{figure}

\subsection{Circuit System}

The next system we consider is a three-dimensional autonomous chaotic system, which has implications for practical applications such as communication, encryption technologies, and secure information transmission \cite{dadras2009novel}. By varying a single parameter, this system can generate two-lobe, three-lobe, and four-lobe chaotic attractors. The system is described as follows.
\begin{equation*}
    \begin{cases}
\dot{x} = y - ax + byz, \\
\dot{y} = cy - xz + z, \\
\dot{z} = dxy - hz,
\end{cases}
\end{equation*}
where \( x(t), y(t), z(t) \) are the state variables, and \( a, b, c, d, h \) are positive constant parameters. Here, we choose $a=3, b=2.7, c=1.7, d=2$, and $h=9$.

In this experiment, we use 90000 trajectories for training and 10000 trajectories for evaluation. The initial states of the trajectories are sampled from the Gaussian distribution $\mathcal{N}((1,-3,1); I)$. We adopt an explicit Runge-Kutta method of order 8 DOP853 \cite{hairer1993solving} as the numerical solver, whose implementation is from \texttt{scipy} and time step size $\Delta t = 10^{-5}$. Fig. \ref{fig:circuit_vis} is the $(x,y)$ visualization of this system, demonstrating the three-lobe structure of the attractor.

\begin{figure}
    \centering
    \includegraphics[width=0.5\linewidth]{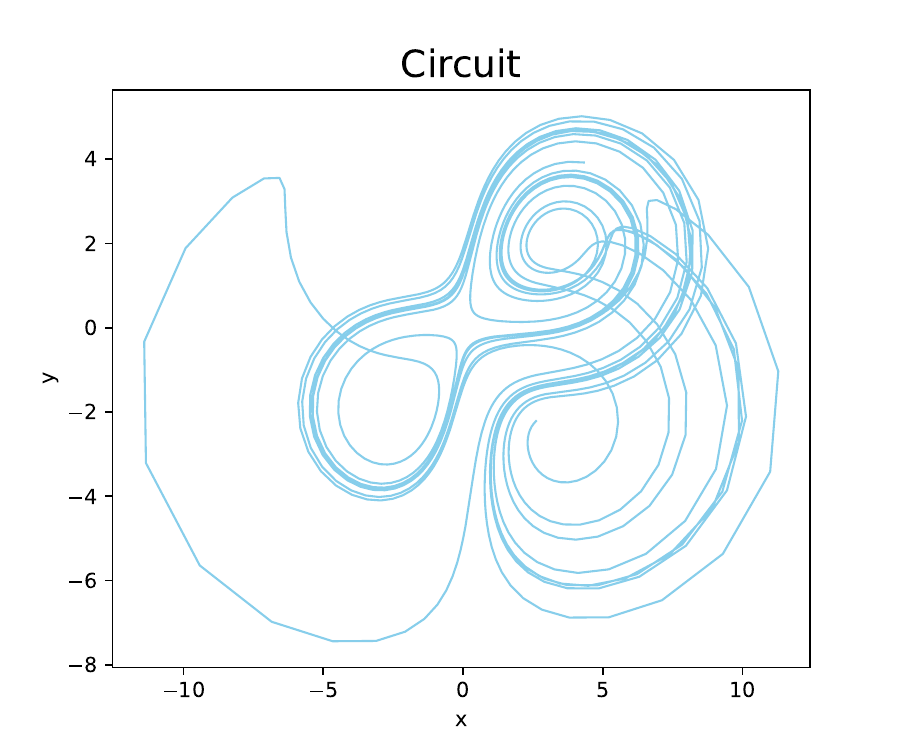}  
    \caption{\textbf{Visualization of one trajectory in the circuit system dataset.}}
    \label{fig:circuit_vis}
\end{figure}

\subsection{Lorenz96 System}

Finally, we consider the high-dimensional Lorenz 96 system, widely studied in nonlinear dynamics and originally proposed to represent the atmospheric convection in large-scale weather models \cite{lorenz1996predictability}. The Lorenz 96 system is particularly useful for studying chaos in large systems and is often used as a benchmark in studies of high-dimensional chaos, where interactions between multiple variables lead to complex, unpredictable dynamics. The system is governed by the following equations.
\begin{align*}
\dot{u}_i = (u_{i+1} - u_{i-2}) u_{i-1} - u_i + F, \quad i = 1, 2, \dots, N,
\end{align*}
where \(u_i\) represents the state at the \(i\)-th variable, and \(N\) is the number of variables in the system. The parameter \(F\) is a forcing term that controls the level of external forcing or input, and the system is typically considered in a periodic boundary condition, \textit{i.e.}, \(u_0 = u_N\) and \(u_{N+1} = u_1\). In our experiments, we adopt $F = 20$ and \(N=10\), which means the system is 10-dimensional. 

In this 10-dimensional system, we generate 50000 trajectories in total. The training set contains 45000 data, and the test set contains 5000. The distribution of the initial states follows the uniform distribution $\mathcal{U}([-2,2]^{10})$. As for the numerical solver, we take DOP853, the explicit Runge-Kutta method of order 8. In implementation, we use \texttt{scipy} and the time step size $\Delta t=10^{-5}$. The $(u_1, u_2)$ trajectory of this system is plotted in Fig. \ref{fig:lorenz96_vis}.

\begin{figure}
    \centering
    \includegraphics[width=0.5\linewidth]{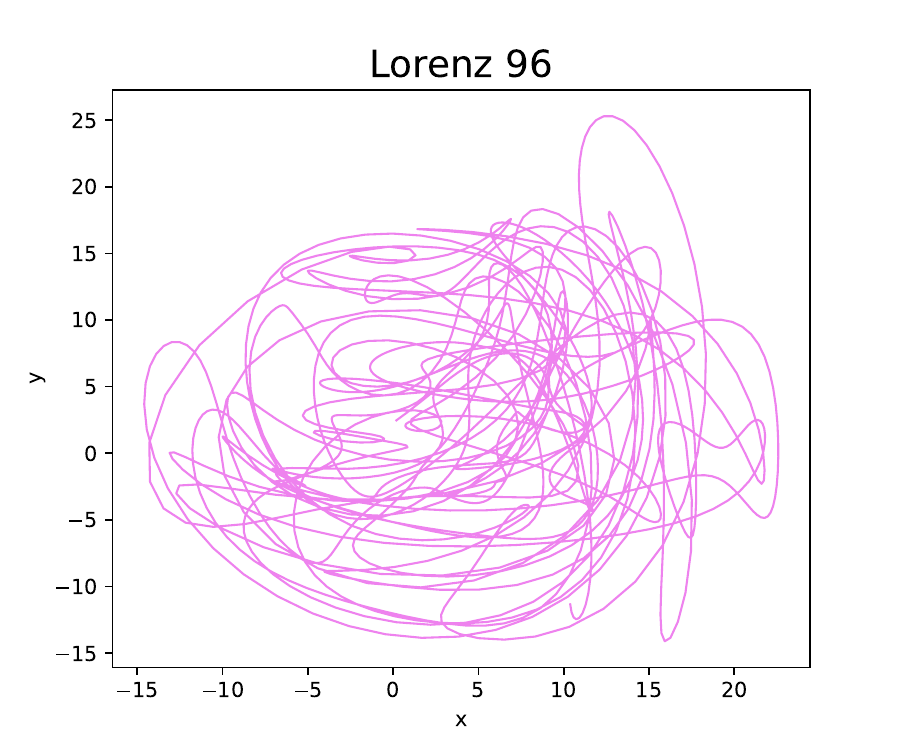}  
    \caption{\textbf{Visualization of one trajectory in the dataset of the Lorenz96 system.}}
    \label{fig:lorenz96_vis}
\end{figure}

\subsection{Planetary System}

In this experiment, we generate 49999 trajectories of the 3-body star–planet system consisting of one central star and three planets, considering the scattering events and collisions. We take 45000 trajectories for training and the remaining 4999 as the test set. All quantities are expressed in astronomically convenient units: stellar mass in solar masses (M$_\odot$), planet mass in solar masses, distances in astronomical units (AU), and angles in radians unless otherwise noted. The initial states of this system are as follows.

\textbf{Central star.}
The stellar mass and radius are fixed as
$
M_\star = 1.0~\mathrm{M}_\odot,\ R_\star = 0.00465~\mathrm{AU}.
$

\textbf{Planet masses and radii.}
Let $\mathrm{M}_{\rm Jup}=10^{-3}\,\mathrm{M}_\odot$ denote one Jupiter mass. 
We first draw the mass of the innermost planet $m_1$ from a log-uniform prior,
$
m_1 \sim \mathrm{LogUniform}\!\big(10^{-3}, 3\big)\times \mathrm{M}_{\rm Jup},
$
and set the next two planet masses within $\pm10\%$ of $m_1$:
$
m_2 = m_1\,u_2, m_3 = m_1\,u_3,\text{where}\ u_2,u_3 \sim \mathcal{U}(0.9,1.1).
$
Planetary radii are assigned using a three-regime broken power law following \citet{Muller24}, derived from the PlanetS catalog, which includes only planets with reliably measured masses and radii:
\begin{align*}
R(m)=
\begin{cases}
1.02\;\Big(\dfrac{m}{0.003\,\mathrm{M}_{\rm Jup}}\Big)^{0.27}, & m < 4.37\times 0.003\,\mathrm{M}_{\rm Jup},\\[8pt]
0.56\;\Big(\dfrac{m}{0.003\,\mathrm{M}_{\rm Jup}}\Big)^{0.67}, & 4.37\times 0.003\,\mathrm{M}_{\rm Jup}\le m < 127\times 0.003\,\mathrm{M}_{\rm Jup},\\[8pt]
18.6\;\Big(\dfrac{m}{0.003\,\mathrm{M}_{\rm Jup}}\Big)^{-0.06}, & m \ge 127\times 0.003\,\mathrm{M}_{\rm Jup},
\end{cases}
\end{align*}
where $R$ is in units of Earth radius. Applying $R(m)$ to $(m_1,m_2,m_3)$ yields $(R_1,R_2,R_3)$.

\textbf{Semi-major axes.}
The innermost semi-major axis is drawn log-uniformly,
$
a_1 \sim \mathrm{LogUniform}(0.02,0.40)~ \text{AU}.
$
The outer planets are initialized by enforcing a fixed spacing of $K$ mutual Hill radii between adjacent orbits. We adopt $K = 4$ to promote efficient dynamical scattering; the resulting orbital distribution remains largely insensitive to the precise spacing, provided that instability and scattering occur \citep{Chatterjee08}. 

Specifically, let $a_k$ and $a_{k+1}$ be adjacent semi-major axes with planet masses $m_k$ and $m_{k+1}$
Their mutual Hill radius is
$
R_{\rm H,mut} = \Big(\frac{m_k+m_{k+1}}{3M_\star}\Big)^{1/3}\frac{a_k+a_{k+1}}{2}.
$
We choose $a_{k+1}>a_k$ such that
$
a_{k+1}-a_k \;=\; K\,R_{\rm H,mut}, K=4,
$
which implicitly defines $a_2$ from $(a_1,m_1,m_2)$ and $a_3$ from $(a_2,m_2,m_3)$.

\textbf{Orbital elements.}
Eccentricities, mean anomalies, arguments of pericenter, inclinations, and longitudes of ascending node are sampled independently as
\begin{align*}
&e_j \sim \mathcal{U}(0.01,0.05), \qquad M_j \sim \mathcal{U}(0,2\pi), \qquad \omega_j \sim \mathcal{U}(0,2\pi),\\
&i_j \sim \mathcal{U}(0^\circ,3^\circ), \qquad \Omega_j \sim \mathcal{U}(0,2\pi), \qquad j=1,2,3.
\end{align*}

As for the numerical solver, we use TRACE \cite{lu2024trace}, a hybrid reversible integrator for planetary dynamics with arbitrary close encounters, in the N-body package \texttt{REBOUND} \cite{rebound}. The collision is performed with \verb|sim.collision = `direct'; sim.collision_resolve = `merge'| in \texttt{REBOUND}. If a planet's distance from the star exceeds 1000 AU, then it will be considered ejected and removed from the simulation. With the \texttt{gr-potential} option in \texttt{REBOUNDx}, we also include the effects of apsidal precession due to general relativity \cite{tamayo2020reboundx}. 

\subsection{Real-world Globular Cluster System}
\textbf{Astrophysical background.}Globular clusters (GCs) are massive, dense, self-gravitating stellar systems that typically contain about $10^4$--$10^6$ stars together with a population of stellar remnants \citep{Harris1996CatalogGC,Heggie2003GravitationalMB}. They are among the oldest stellar systems surviving in the present-day Universe, with typical ages of order $\sim 10~\mathrm{Gyr}$ or older \citep{VandenBerg2013AgesGC}. Because their ages, metallicities, orbits, and dynamical properties retain information about early star formation and accretion events, Galactic GCs are thought to be closely linked to the early assembly history of the Milky Way, including both in-situ formation and accretion through early galaxy mergers \citep{Massari2019OriginGC}.

GCs also provide unique laboratories for studying stellar dynamics in dense environments \citep{Hut1992BinariesInGC}. Their high stellar densities make close encounters, binary interactions, mergers, and physical collisions dynamically important over long timescales. These processes can form or transform exotic stellar populations, including blue straggler stars \citep{Bailyn1995BlueStrag}, low-mass X-ray binaries \citep{Pooley2003CloseBin}, millisecond pulsars \citep{Ransom2008PulsarsGC}, and possibly intermediate-mass black holes \citep{Miller2002ProducIMBH,Mezcua2017ObsIMBH}. These properties make GCs an important astrophysical system for studying the coupled effects of long-term relaxation, stellar evolution, binary dynamics, and external tidal fields.

\textbf{Existing approaches and computational challenges.} A central goal in GC studies is to infer the initial conditions and evolutionary histories that could have produced the present-day cluster population. This inverse problem has been approached for decades through models of GC mass-function evolution, direct and Monte Carlo dynamical simulations, and formation models linking present-day GC systems to high-redshift star formation and galaxy assembly \citep{Vesperini1998GCMF,Baumgardt2003DynamicalEvolution,Chatterjee2013DynamicalState,Kruijssen2015GCFormation,Forbes2018GCFormation,Pijloo2015InitialConditions}. However, a Hubble time of stellar evolution, two-body relaxation, mass segregation, binary interactions, tidal stripping, and possible core collapse can erase or strongly reshape memory of the birth state, making the mapping from initial cluster parameters to present-day observables highly degenerate \citep{VesperiniHeggie1997IMF,Baumgardt2003DynamicalEvolution,Giersz2011MonteCarlo47Tuc,Pijloo2015InitialConditions}.

Forward modelling GC evolution is itself a major challenge in theoretical astrophysics. GCs are collisional gravothermal systems, for which the cumulative effects of distant gravitational encounters must be modelled over many relaxation times \citep{Spitzer1987DynEvolGC}. Direct $N$-body simulations of realistic GCs are computationally demanding, with the computational cost increasing steeply with particle number, approximately as $N^{7/3}$ per dynamical time \citep{Makino1988PerformNbody,Wang2016Dragon}. Monte Carlo methods therefore provide a practical alternative by statistically modelling the cumulative effects of two-body relaxation and reproducing the bulk evolution of direct $N$-body models with greatly reduced computational cost \citep{Henon1971MCModel,Giersz2013MOCCA,Rodriguez2016CMC}. Nevertheless, realistic Monte Carlo models can still require hundreds to thousands of CPU hours for Hubble-time integrations and scale rapidly with particle number and added physical ingredients \citep{Kremer2020CMCCatalog,Prieto2024IMBHProgenitor}. These costs make brute-force inverse inference over large initial-condition spaces prohibitive and motivate a distributional, simulation-trained approach such as Bi-CFM.

\textbf{Simulation details.} We simulated 124 initial conditions and augmented the resulting sample with 33 publicly available trajectories from previous studies. For each trajectory, we retained snapshots at ages of at least $9\,\mathrm{Gyr}$ to construct the dataset.
As for the sampling of physical parameters, the initial virial radius $r_{\rm v}$ is sampled log-uniformly over $0.5$--$4.0\,{\rm pc}$. The Galactocentric radius $R_{\rm g}$ is sampled uniformly over $2.0$--$25.0\,{\rm kpc}$. And the metallicity is sampled uniformly over $-2.0 \leq [{\rm M/H}] \leq 0.0$. 
All other initial conditions follow \citet{Kremer2020CMCCatalog}: clusters are initialized as King models with $W_0=5$, a Kroupa initial mass function, and an initial binary fraction of $f_{\rm b}=5\%$. Binary secondary masses are drawn from a flat mass-ratio distribution over $0.1 \leq q \leq 1$, orbital periods are drawn from a log-flat distribution, and eccentricities follow a thermal distribution. 

\section{Details of Model Implementations}

In this section, we introduce the details of model implementations, including computational resources and implementations of our method and baselines.

\subsection{Computational Resources}
All GPU experiments are conducted on a single NVIDIA A6000 card.
The server is equipped with two Intel third-generation Ice Lake processors, each running at a base frequency of at least 2.6 GHz and a thermal design power (TDP) of no less than 250 W, providing a total of 128 CPU cores.

\subsection{Bi-CFM and CBi-CFM}

\textbf{Details of the conditional CFM and the implementation of bidirectional training.} To begin with, we introduce the CFM for generating the conditional probability $p(\mathbf{x} \mid \mathbf{c})$. We note that, in this conditional generation $p(\mathbf{x}|\mathbf{c})$, $\mathbf{c}$ represents an external condition provided by the user (\textit{e.g.}, a label or context), whereas in the original CFM as stated in Section \textcolor{blue}{4.1}, the condition $z=(\mathbf{x}^0,\mathbf{x}^1)$ is the endpoints of the flow path used to compute the conditional vector field instead of the marginal vector field. Following the formulation of CFM introduced in Section \textcolor{blue}{4.1}, we can extend all intermediate quantities, namely \( p_\tau(\mathbf{x}) \), \( p_\tau(\mathbf{x}\mid\mathbf{z}) \), \( v^\tau(\mathbf{x}) \), \( v^\tau(\mathbf{x}\mid\mathbf{z}) \), and \( v_\theta(\tau,\mathbf{x}) \), to be conditioned on \( \mathbf{c} \). As a result, these quantities become \( p_\tau(\mathbf{x}\mid\mathbf{c}) \), \( p_\tau(\mathbf{x}\mid\mathbf{z}, \mathbf{c}) \), \( v_\tau(\mathbf{x}\mid\mathbf{c}) \), \( v_\tau(\mathbf{x}\mid\mathbf{z},\mathbf{c}) \), and \( v_\theta(\tau, \mathbf{x},\mathbf{c}) \), while the derivation still holds. This conditioning allows the model to approximate the target distribution \( p(\mathbf{x} \mid \mathbf{c}) \), so the corresponding loss function is 
\begin{equation}
\mathcal{L}_{p(\mathbf{x}\mid\mathbf{c})} = \mathbb{E}_{\tau\sim p(\tau),(\mathbf{x}^\tau,\mathbf{z},\mathbf{c})\sim p_\tau(\mathbf{x}^\tau|\mathbf{z},\mathbf{c})p(\mathbf{z})p(\mathbf{c})}\left[\| v_\theta(\tau,\mathbf{x}^\tau, \mathbf{c}) - v_\tau(\mathbf{x}^\tau|\mathbf{z}, \mathbf{c}) \|_2^2\right].
\end{equation}

Next, we describe how to jointly learn \( p(\mathbf{u}_0^\text{gt} \mid \mathbf{u}_T^\text{gt}) \) and \( p(\mathbf{u}_T^\text{gt} \mid \mathbf{u}_0^\text{gt}) \) in detail. 
For a batch of paired data \(\{(\mathbf{u}_{k,0}^\text{gt}, \mathbf{u}_{k,T}^\text{gt})\}_{k=1}^{K}\), where $k$ is the index of data, we randomly select \(k_0\) samples to train \(p(\mathbf{u}_T^\text{gt} \mid \mathbf{u}_0^\text{gt})\)  and use the remaining \(K - k_0\) samples to train \(p(\mathbf{u}_0^\text{gt} \mid \mathbf{u}_T^\text{gt})\). 
Specifically, we treat each pair \((\mathbf{u}_0^\text{gt}, \mathbf{u}_T^\text{gt})\) as a single sample, so that both \(\mathbf{x}\) and \(v^\tau\) belong to \(\mathbb{R}^{2n}\) in this formulation. For each sample, we assign the corresponding time pair \((\tau_1, \tau_2)\) to it, where \((\tau_1, \tau_2)\) represents the temporal variables associated with each sample.
For the first \(k_0\) samples, we interpolate \(\mathbf{u}_T^\text{gt}\) along the probability path $\mathbf{\tilde{u}}_T = \alpha(\tau_2)\mathbf{x}^0 + \beta(\tau_2)\mathbf{u}_T^\text{gt} + \eta(\tau_2)\boldsymbol{\epsilon}$, where \(\tau_2\) is sampled from \(p(\tau)\) and \(\tau_1 = 1\), indicating that \(\mathbf{u}_0^\text{gt}\) serves as a clean conditioning variable. 
For the remaining \(K - k_0\) samples, we apply a symmetric procedure to learn the reverse conditional distribution \(p(\mathbf{u}_0^\text{gt} \mid \mathbf{u}_T^\text{gt})\). 
In this case, we interpolate \(\mathbf{u}_0^\text{gt}\) along the path
$\alpha(\tau_1)\mathbf{x}^0 + \beta(\tau_1)\mathbf{u}_0^\text{gt} + \eta(\tau_1)\boldsymbol{\epsilon}$
, where \(\tau_1\) is sampled from \(p(t)\) and \(\tau_2 = 1\), 
representing that \(\mathbf{u}_T\) now acts as the clean conditioning variable. 

\textbf{Hyperparameters.} We normalize the data by subtracting the mean value and dividing by the standard deviation of different features. As for the model architecture, we adopt the Multilayer Perception (MLP) as the base model. The time of CFM is repeated and concatenated to other inputs as a channel. We set the MLP to have 6 layers with 1024 nodes. The activation function is the Scaled Exponential Linear Unit (SELU), which induces self-normalizing properties \cite{klambauer2017self}. The hyperparameters are presented in Table \ref{tab:para}. During training, we take $\eta(t)=10^{-5}$, batch size $=1024$, learning rate $10^{-4}$, and 600000 updates. As for the prior distribution, on the planetary system, we take 6 random walks with step size $=0.002$ and 5 projection steps. During sampling, we set the number of sampling steps to 100 and use the Dormand–Prince (RKDP) method \cite{DORMAND198019} in \texttt{Torchdyn} \cite{poli2021torchdyn}. In addition, in the planetary system experiment, in order to make the optimization stable, we clip the gradient of optimization to 1. Additionally, we post-process the output of sampling by clipping its upper bound to the maximum value of the training dataset, allowing us to filter out outliers.

\begin{table}[]
\caption{Hyperparameters of Bi-CFM and CBi-CFM.}\label{tab:para}
\begin{tabular}{@{}llrr@{}}
\toprule
Phase & Hyperparameter & Lorenz, Circuit, Lorenz96 & Planetary \\ \midrule
\multirow{5}{*}{Training} 
& $\eta(t)$ & $10^{-5}$ & $10^{-5}$\\ 
& Batch Size & 1024 & 1024\\
& Learning Rate & $10^{-4}$ & $10^{-4}$\\ 
& Number of Updates & 600000 & 600000\\ 
& Gradient Clip & - & 1\\ 
\midrule
\multirow{3}{*}{\shortstack{Prior Distribution\\of CBi-CFM}} 
& Number of Random Walks & - & 6\\ 
& Step Size $\Delta r$ & - & 0.002\\ 
& Number of Projection Steps & - & 5\\ 
\midrule
\multirow{2}{*}{Sampling} 
& Number of Sampling Steps & 100 & 100\\ 
& Solver & Dormand–Prince & Dormand–Prince\\ 
\botrule
\end{tabular}
\end{table}

\subsection{Baselines}
Below, we report the details of the baseline implementation.

\textbf{Backward Integration.} We take the same traditional solver as the one used for data generation in each system. And we set the time step size to be $10^{-5}$, which corresponds to the best performance. In addition, in order to accelerate this inference time, we use the \texttt{multiprocessing} package to fully leverage multiple processors.

\textbf{Backbone.} We normalize the data in the same way as Bi-CFM. And we utilize the same model architecture as the base model of Bi-CFM and CBi-CFM for fair comparison. The number of nodes and layers is also the same. During training, the batch size is 2048, the learning rate is $10^{-4}$, and the number of updates is $10000$.

\textbf{Random.} We perform Random by randomly changing the order of states at different time steps, so that the correspondence is changed, while the distributions at each time step remains the same as the ground truth.

\begin{figure}[t]
    \centering
    \begin{subfigure}[t]{1\textwidth}
        \centering
        \includegraphics[width=0.6\linewidth]{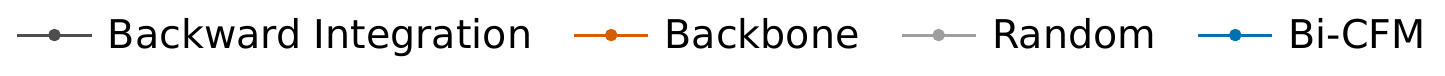}
    \end{subfigure}
    \begin{subfigure}[t]{0.35\textwidth}
        \centering
        \includegraphics[width=\linewidth]{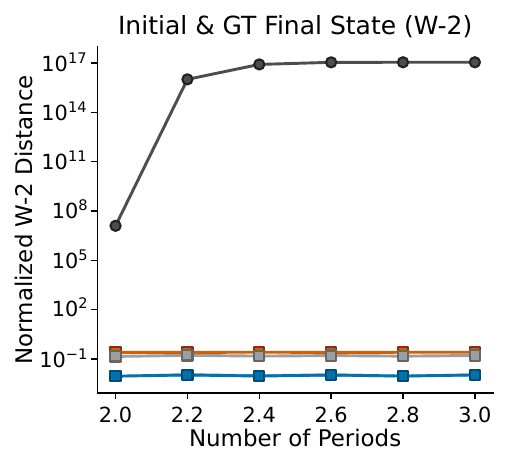}
        \vspace{-15pt}
    \end{subfigure}
    \hspace{0.08\linewidth}
    \begin{subfigure}[t]{0.35\textwidth}
        \centering
        \includegraphics[width=\linewidth]{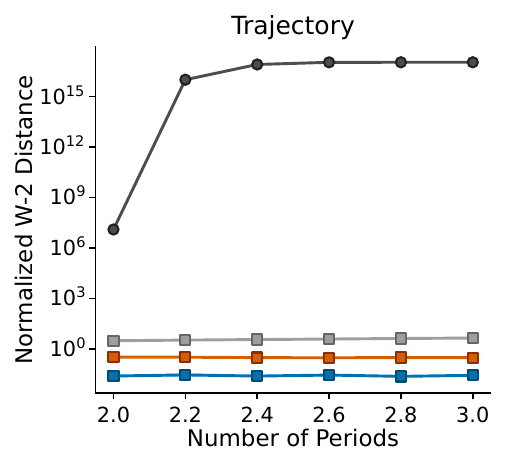}
        \vspace{-15pt}
    \end{subfigure}
    \caption{Evaluation metrics versus horizon length on the Lorenz system.}
    \label{fig:lorenz_metric}
\end{figure}

\begin{figure}[t]
    \centering
    \begin{subfigure}[t]{1\textwidth}
        \centering
        \includegraphics[width=0.6\linewidth]{figs/method_color_legend.pdf}
    \end{subfigure}
    \begin{subfigure}[t]{0.35\textwidth}
        \centering
        \includegraphics[width=\linewidth]{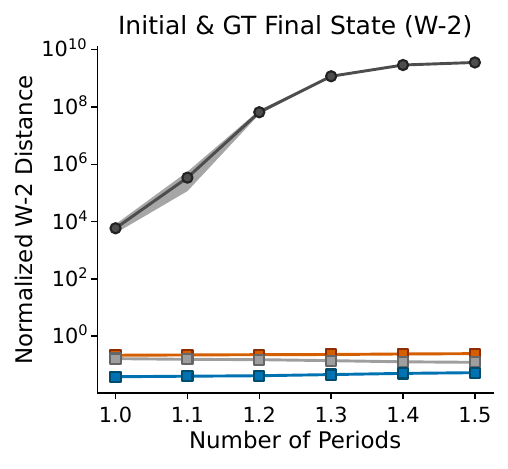}
        \vspace{-15pt}
    \end{subfigure}
    \hspace{0.08\linewidth}
    \begin{subfigure}[t]{0.35\textwidth}
        \centering
        \includegraphics[width=\linewidth]{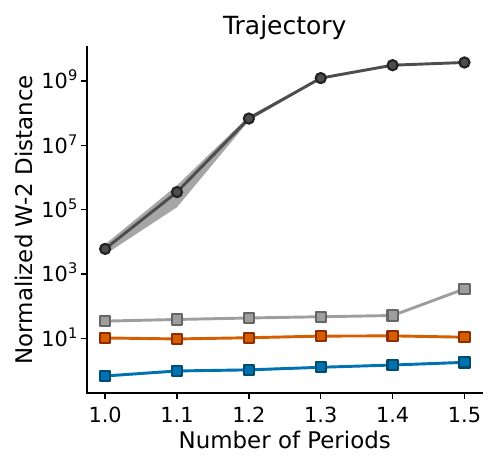}
        \vspace{-15pt}
    \end{subfigure}
    \caption{Evaluation metrics versus horizon length on the circuit system.}
    \label{fig:circuit_metric}
\end{figure}

\begin{figure}[t]
    \centering
    \begin{subfigure}[t]{1\textwidth}
        \centering
        \includegraphics[width=0.6\linewidth]{figs/method_color_legend.pdf}
    \end{subfigure}
    \begin{subfigure}[t]{0.35\textwidth}
        \centering
        \includegraphics[width=\linewidth]{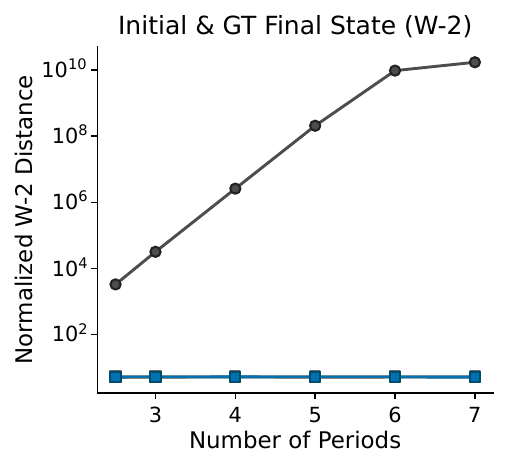}
        \vspace{-15pt}
    \end{subfigure}
    \hspace{0.08\linewidth}
    \begin{subfigure}[t]{0.35\textwidth}
        \centering
        \includegraphics[width=\linewidth]{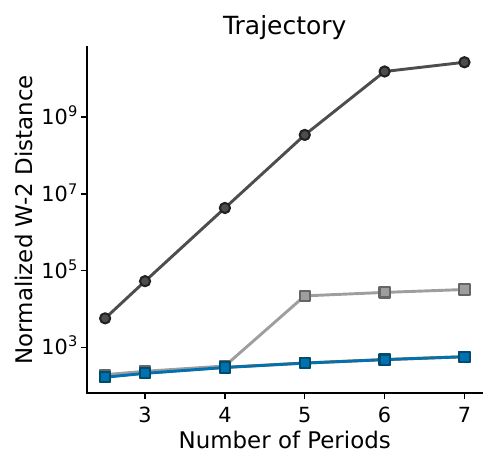}
        \vspace{-15pt}
    \end{subfigure}
    \caption{Evaluation metrics versus horizon length on the Lorenz 96 system.}
    \label{fig:lorenz96_metric}
\end{figure}

\begin{figure}[t]
    \centering
    \begin{subfigure}[t]{1\textwidth}
        \centering
        \includegraphics[width=\linewidth]{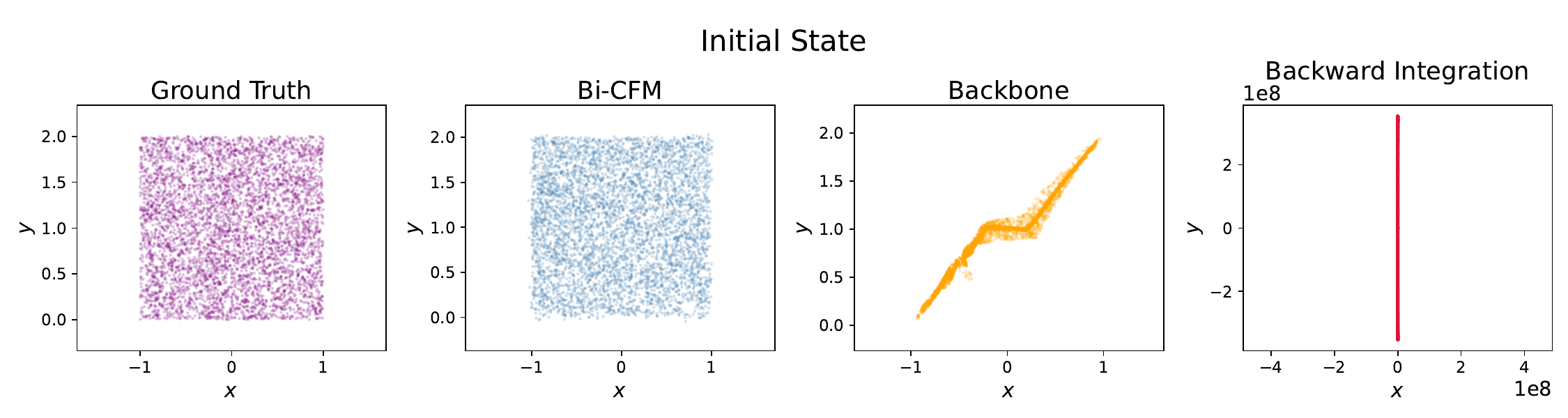}
        \vspace{-15pt}
    \end{subfigure}
    \hfill
    \begin{subfigure}[t]{1\textwidth}
        \centering
        \includegraphics[width=\linewidth]{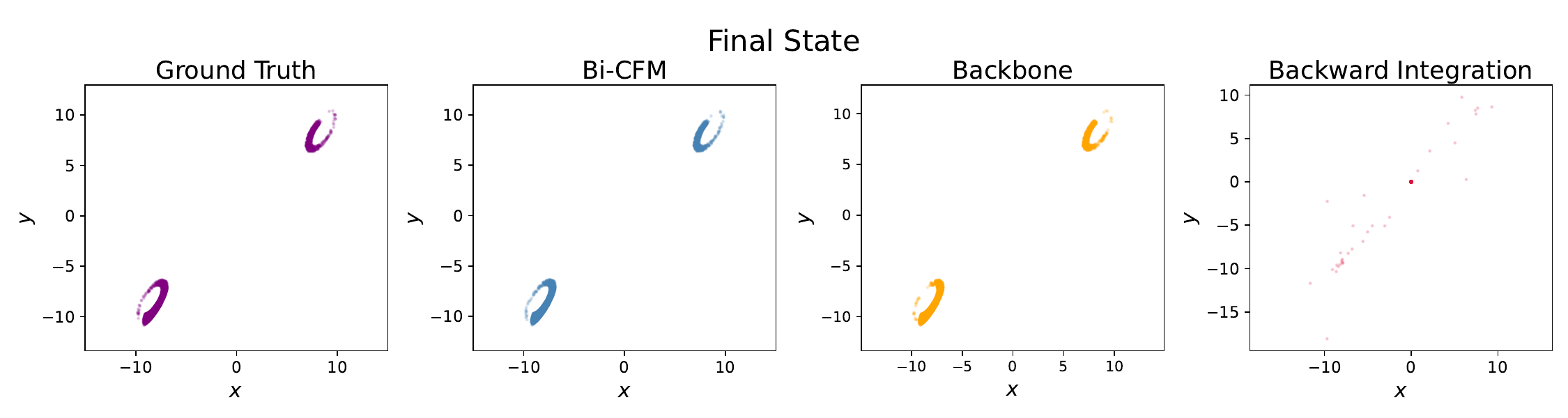}
        \vspace{-15pt}
    \end{subfigure}
    \caption{Visualization of $(x,y)$-value distributions of the Lorenz system.}
    \vspace{-15pt}
    \label{fig:lorenz_xy}
\end{figure}

\begin{figure}[t]
    \centering
    \begin{subfigure}[t]{1\textwidth}
        \centering
        \includegraphics[width=\linewidth]{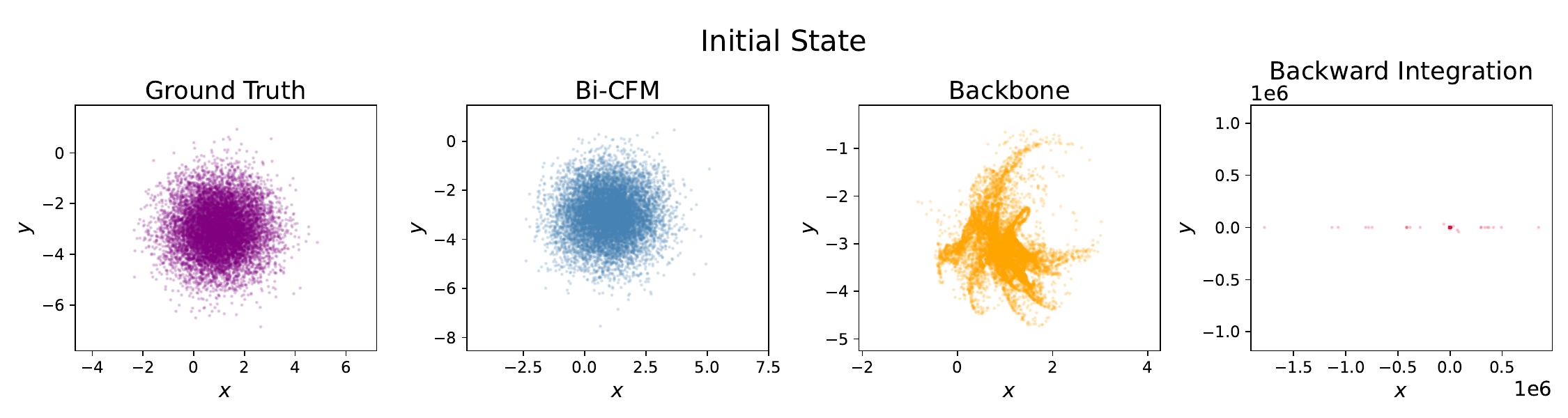}
        \vspace{-15pt}
    \end{subfigure}
    \begin{subfigure}[t]{1\textwidth}
        \centering
        \includegraphics[width=\linewidth]{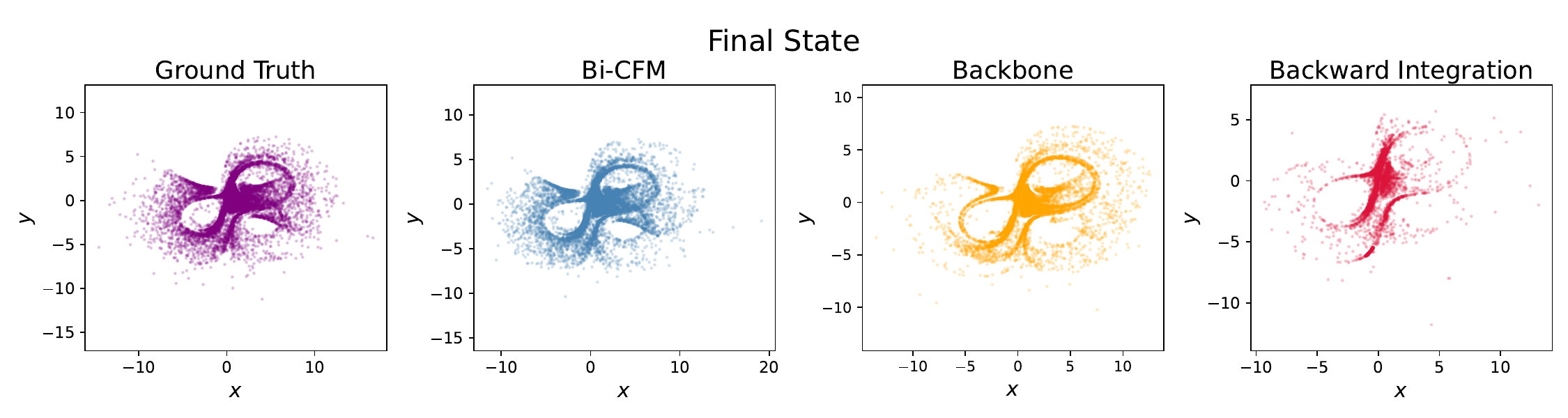}
        \vspace{-15pt}
    \end{subfigure}
    \caption{Visualization of $(x,y)$-value distributions of the Circuit system.}
    \label{fig:circuit_xy}
\end{figure}

\begin{figure}
    \vspace{-15pt}
    \centering
    \includegraphics[width=\linewidth]{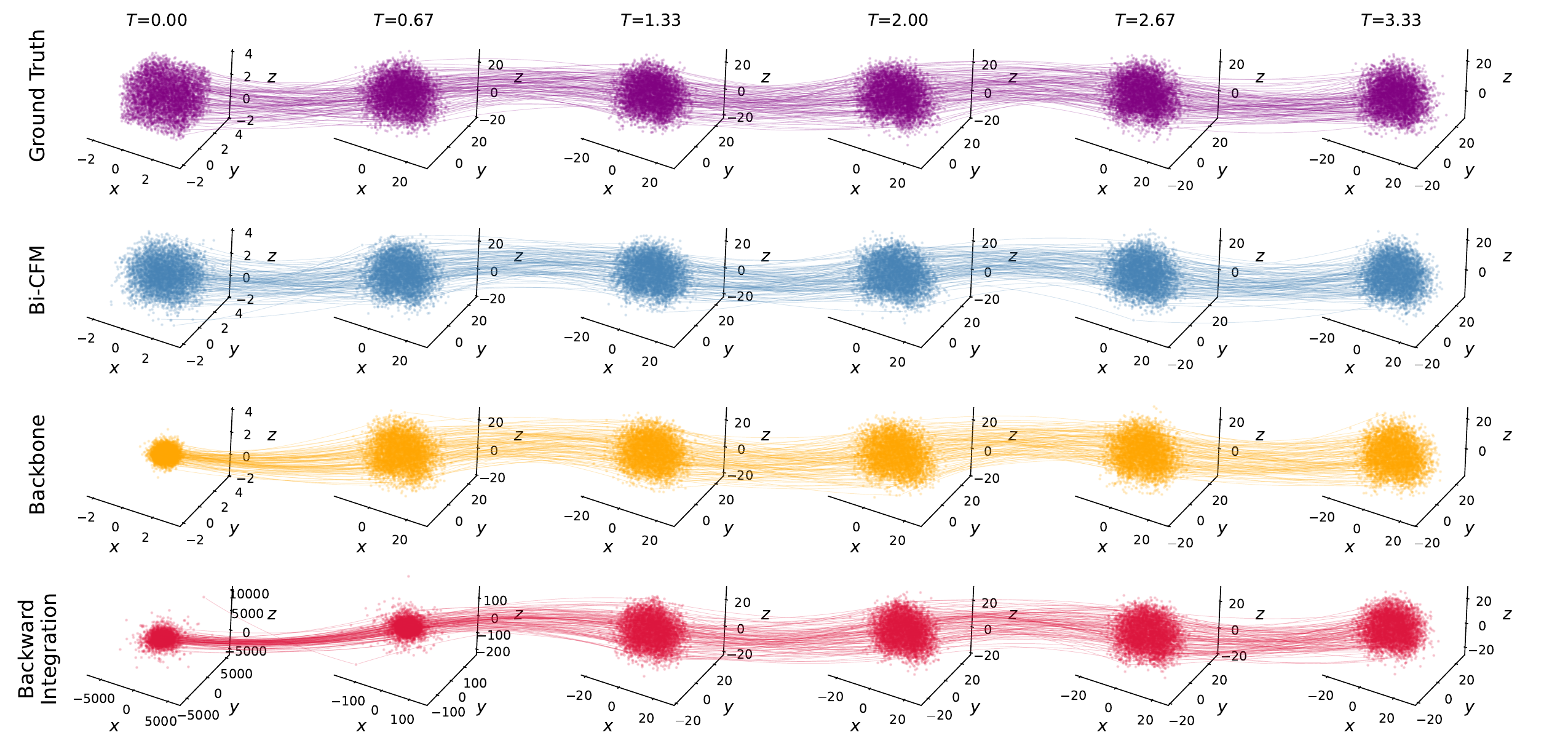}  
    \caption{\textbf{Trajectories of $(u_0,u_1,u_2)$-value of the Lorenz 96 system.}}
    \label{fig:lorenz96_traj}
\end{figure}

\begin{figure}
    \vspace{-15pt}
    \centering
    \begin{subfigure}[t]{1\textwidth}
        \centering
        \includegraphics[width=\linewidth]{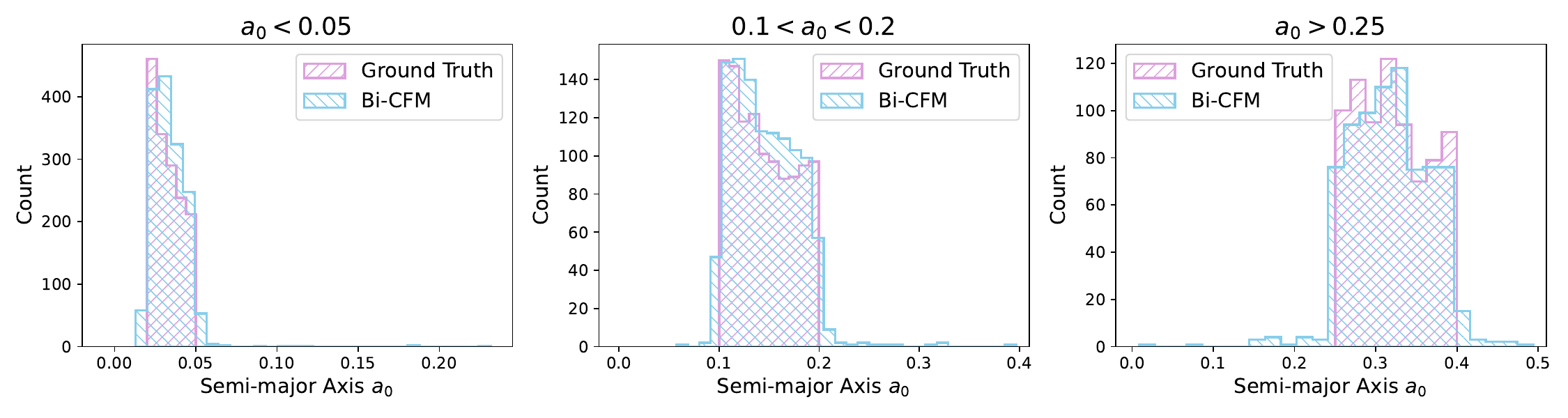}
        \vspace{-15pt}
    \end{subfigure}
    \hfill
    \begin{subfigure}[t]{1\textwidth}
        \centering
        \includegraphics[width=\linewidth]{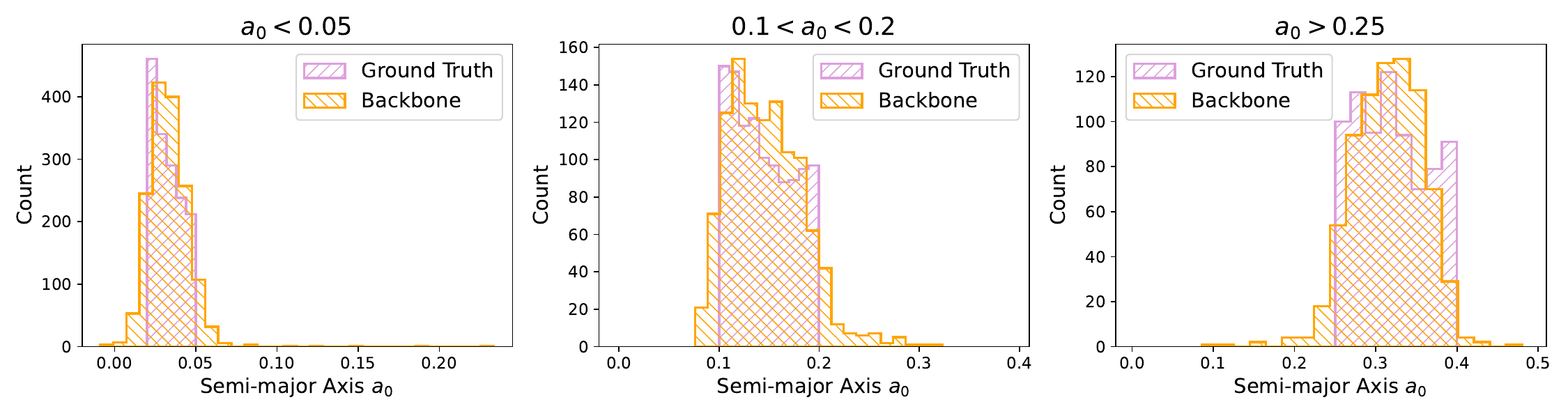}
        \vspace{-15pt}
    \end{subfigure}
    \caption{\textbf{Distributions of inferred initial semi-major axis by Bi-CFM and Backbone, according to trajectories with small, middle, and large ground truth initial semi-major axis.}}
    \label{fig:a0}
\end{figure}

\begin{figure}
    \vspace{-15pt}
    \centering
    \includegraphics[width=0.8\linewidth]{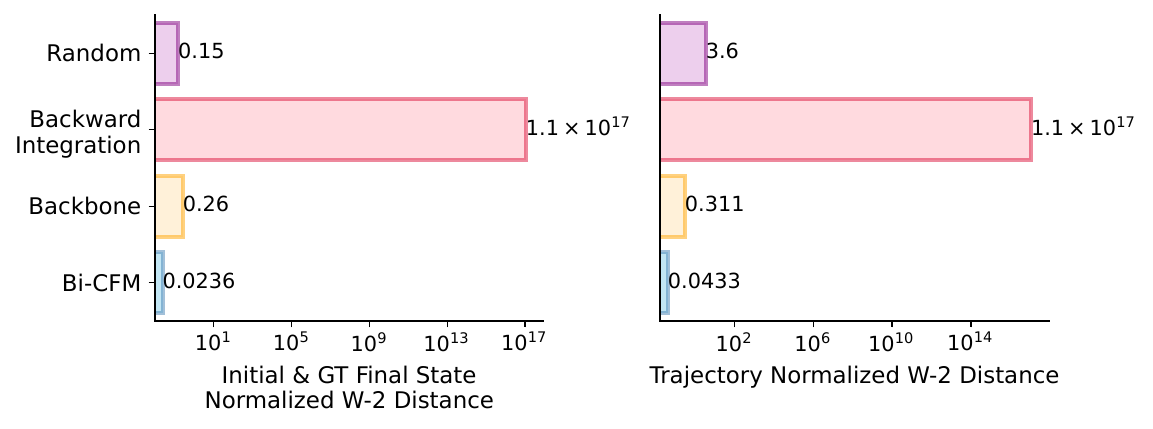}  
    \caption{\textbf{W-2 distance of initial-final state pairs and trajectories on the Lorenz system with noise.}}
    \label{fig:robust}
\end{figure}

\section{Other Visualizations}

\textbf{Evaluation metrics of three chaotic systems.} Owing to space limitations in the main text, in Fig. \ref{fig:lorenz_metric}, \ref{fig:circuit_metric}, and \ref{fig:lorenz96_metric}, we report the other evaluation metrics for the three chaotic systems, including the W-2 distance between paired initial and target final states and the Wasserstein-2 distance between trajectories. As illustrated in the figures, across different systems and evaluation metrics, Bi-CFM consistently achieves the lowest error. Moreover, its error increases only gradually over time, whereas that of Backward Integration grows exponentially.

\newpage
\textbf{Visualizations of the $(x,y)$-value of the inferred initial and evolved final states.} To provide complementary perspectives on the distributions of initial and final states obtained by different methods, we present the distributions in the $(x,y)$ plane for the Lorenz and Circuit systems in Fig. \ref{fig:lorenz_xy} and \ref{fig:circuit_xy}, respectively. From these two figures, we both observe that Bi-CFM closely matches the distribution of the ground truth on both systems, while the other two mismatch the ground truth. Especially, the range of Backward Integration's value is larger than the ground truth. And the initial distributions of Backbone on both systems are close to a combination of line segments.

\textbf{Trajectories of $(u_0,u_1,u_2)$-value of the Lorenz 96 system.} Because of space limitations in the main text, we provide the trajectory visualizations for the Lorenz 96 system in Fig. \ref{fig:lorenz96_traj}. Since Lorenz 96 is high-dimensional, we visualize only its first three dimensions. At $T=0$, only Bi-CFM produces a state distribution consistent with the ground truth, whereas at later times the results of different methods become visually similar. This may be because the high dimensionality and chaoticity of the system make it difficult to distinguish different methods using such low-dimensional visualizations alone.

\textbf{Results of inferring the corresponding interval with Backbone and Bi-CFM on the planetary system.} In addition to the CBi-CFM results presented in the main text, we also examine whether Bi-CFM and Backbone can correctly infer which interval of the initial semi-major axis $a_0$ corresponds to a given final state. From visualizations of the inferred $a_0$ in Fig. \ref{fig:a0}, we can observe that both Bi-CFM and Backbone can distinguish the interval of $a_0^\text{gt}$ according to the final states. However, as shown, the Backbone infers more $a_0$ values outside the corresponding interval, leading to a broader distribution compared with the ground truth. In contrast, the distributions obtained by Bi-CFM fit better with the ground truth, indicating that Bi-CFM learns the causal relationship more effectively than the base model Backbone.

\textbf{Results of robustness analysis.} After adding Gaussian noise with a scale of $0.1$ to the target final states, we re-evaluate all metrics. In the main text, we report the results for three metrics, while the remaining two are provided in Fig. \ref{fig:robust}. Bi-CFM still maintains the lowest error level, whereas Backward Integration performs the worst.

\section{Proof of Proposition 1}

Below, we provide the proof of Proposition 1 in the Method section (Section \textcolor{blue}{4}).

\begin{proposition}
Let $H:\mathbb{R}^n\to\mathbb{R}$ be continuously differentiable, and let the conservation manifold be defined as
\[
\mathcal{M} = \{\mathbf{u}\in\mathbb{R}^n : H(\mathbf{u}) = H(\mathbf{u}_T^\text{gt})\},
\]
with $\nabla H(\mathbf{u}) \neq \mathbf{0}$. With $P$ defined as 
\[
P(\mathbf{u}) \,=\, I \;-\; \frac{\nabla H(\mathbf{u})\,\nabla H(\mathbf{u})^{\top}}{\|\nabla H(\mathbf{u})\|^2},
\]
for any $\mathbf{r}\in\mathbb{R}^n$, the vector $\mathbf{r}\cdot P(\mathbf{u})$ lies in the tangent space $T_{\mathbf{u}}\mathcal{M}$ of the manifold at $\mathbf{u}$.
\end{proposition}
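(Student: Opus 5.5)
The plan is to identify the tangent space concretely and then check orthogonality by direct computation. Since $H$ is $C^1$ and $\nabla H(\mathbf{u})\neq\mathbf{0}$, the level set $\mathcal{M}$ is, near $\mathbf{u}$, an embedded $C^1$ hypersurface by the regular value (implicit function) theorem. Its tangent space is
\[
T_{\mathbf{u}}\mathcal{M}=\{\mathbf{w}\in\mathbb{R}^n:\nabla H(\mathbf{u})^\top\mathbf{w}=0\}.
\]
I would justify this characterization in two steps.

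\textbf{Inclusion.} For any $C^1$ curve $\gamma$ in $\mathcal{M}$ with $\gamma(0)=\mathbf{u}$, we have $H(\gamma(s))\equiv H(\mathbf{u}_T^\text{gt})$. Differentiating at $s=0$ gives $\nabla H(\mathbf{u})^\top\gamma'(0)=0$, so $T_{\mathbf{u}}\mathcal{M}\subseteq\nabla H(\mathbf{u})^\perp$.

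\textbf{Equality.} The implicit function theorem makes $\mathcal{M}$ locally an $(n-1)$-dimensional manifold, so both spaces have dimension $n-1$ and the inclusion is an equality. Alternatively, one can take the orthogonal complement of $\nabla H(\mathbf{u})$ as the definition of the tangent space of a level set.

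Next I verify that $\mathbf{r}\cdot P(\mathbf{u})$ lies in this hyperplane. Write $\mathbf{g}=\nabla H(\mathbf{u})$. Note that $P$ is symmetric, so $\mathbf{r}\cdot P$, read as a row vector times $P$, equals $(P\mathbf{r})^\top$; it is harmless to work with $P\mathbf{r}$. Then
\[
\mathbf{g}^\top P\mathbf{r}=\mathbf{g}^\top\mathbf{r}-\frac{(\mathbf{g}^\top\mathbf{g})(\mathbf{g}^\top\mathbf{r})}{\|\mathbf{g}\|^2}=0,
\]
which uses $\mathbf{g}\neq\mathbf{0}$ so that the denominator makes sense. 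Hence $P\mathbf{r}\in T_{\mathbf{u}}\mathcal{M}$ for every $\mathbf{r}$. As a remark, $P$ is the orthogonal projector onto $T_{\mathbf{u}}\mathcal{M}$: it is symmetric and idempotent, with range $\mathbf{g}^\perp$.

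The only step requiring care is the identification of $T_{\mathbf{u}}\mathcal{M}$ with $\nabla H(\mathbf{u})^\perp$. This is where $C^1$ regularity and the nonvanishing gradient enter. I would handle it by citing the regular level set theorem rather than reproving it. I would also note that the hypothesis $\nabla H(\mathbf{u})\neq\mathbf{0}$ is understood at the point $\mathbf{u}\in\mathcal{M}$ under consideration.
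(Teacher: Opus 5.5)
Your proposal is correct and follows essentially the same route as the paper: identify $T_{\mathbf{u}}\mathcal{M}$ with the orthogonal complement of $\nabla H(\mathbf{u})$, then check by direct computation that $\nabla H(\mathbf{u})^\top P(\mathbf{u})\mathbf{r}=0$. The differences are only added rigor: you justify the tangent-space characterization via the regular level set theorem, which the paper simply states, and you use the symmetry of $P$ to reconcile $\mathbf{r}\cdot P(\mathbf{u})$ with $P(\mathbf{u})\mathbf{r}$, a row/column distinction the paper's proof leaves implicit.
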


\begin{proof}
For the manifold $\mathcal{M}$, the tangent space at $\mathbf{u}\in\mathcal{M}$ is
\[
T_{\mathbf{u}}\mathcal{M} = \{\mathbf{v}\in\mathbb{R}^n : \mathbf{v}\nabla H(\mathbf{u}) = 0\}.
\]
Let $\mathbf{v} = \mathbf{r}P(\mathbf{u})$ for an arbitrary $\mathbf{r}\in\mathbb{R}^n$. Using the definition of $P(\mathbf{u})$, we have
\begin{align*}
\mathbf{v}\nabla H(\mathbf{u}) &= \mathbf{r}P(\mathbf{u})\nabla H(\mathbf{u}) \\
&= \mathbf{r}\Big(\nabla H(\mathbf{u}) - \frac{\nabla H(\mathbf{u})\nabla H(\mathbf{u})^{\top}\nabla H(\mathbf{u})}{\|\nabla H(\mathbf{u})\|^2}\Big) \\
&= \big(\nabla H(\mathbf{u}) - \nabla H(\mathbf{u})\big)\mathbf{r} \\
&= \mathbf{0}.
\end{align*}
Hence $\mathbf{v}\in T_{\mathbf{u}}\mathcal{M}$, which proves that $P(\mathbf{u})\,\mathbf{r}$ lies in the tangent plane of the conservation manifold at $\mathbf{u}$. 
\end{proof}

\section{Analyze Chaoticity with the Lyapunov Spectrum}

In this section, we introduce the Lyapunov spectrum analysis, which is a tool for analyzing the chaoticity. With it, we further discuss the chaotic behaviors of forward and reverse dynamics.

\subsection{Asymmetric Chaoticity of Forward and Reverse Dynamics}
Herein, we analyze the asymmetric chaotic behaviors of forward and reverse dynamics with Lyapunov Spectrum Analysis. For a dynamical system with the governing equation
\begin{align*}
    \dot{\mathbf{u}} = f(\mathbf{u}),\quad \mathbf{u}\in\mathbb{R}^n,
\end{align*}
the Lyapunov spectrum $\{ \lambda_1, \lambda_2, \ldots, \lambda_n \}\ (\lambda_1\geq\lambda_2\geq\cdots\geq\lambda_n)$ provides a measure of the system’s stability and degree of chaos by quantifying the exponential rates at which nearby trajectories diverge or converge in different directions of the tangent space. Below, we discuss the Lyapunov spectrum of the forward and reverse dynamics theoretically to demonstrate that the degrees of the two dynamics' chaos are asymmetric and the reverse dynamics can be more chaotic than the forward dynamics. The Lyapunov spectrum is defined from the solution $\Phi(t)$ of the variational equation
\begin{align}
    \dot{\Phi}(t) = J(\mathbf{u}(t)) \Phi(t), \quad \Phi(0) = I,
    \label{eq:phi}
\end{align}
where $J(x) = Df(x)$ is the Jacobian matrix of the system and $I$ is the identity matrix. $\Phi(t)$ characterizes the evolution of perturbations through 
\begin{align*}
\delta \dot{\mathbf{u}}(t) = \Phi(t)\, \delta \mathbf{u}(t),
\end{align*}
and the magnitude of $\delta \mathbf{u}(t)$ consequently grows as $\exp(\Phi(t))$.
Then, the Lyapunov matrix $\Lambda$ is defined as
\begin{align*}
    \Lambda = \lim_{t \to \infty} \frac{1}{2t} \log\left[ \Phi(t)\Phi^T(t) \right],
\end{align*}
and the Lyapunov exponents $\lambda_i\ (i=1,\cdots,n)$ are given by the eigenvalues of $\Lambda$ \cite{lyapunov1892general, oseledets1968multiplicative}.

The maximal Lyapunov exponent $\lambda_1$ quantifies the exponential rate of divergence along the most unstable direction, thus it characterizes the dominant mode of sensitivity to initial conditions, which is the hallmark of chaos. A system is therefore considered chaotic if its largest Lyapunov exponent is positive, indicating that infinitesimal perturbations grow exponentially in at least one direction, leading to unpredictable long-term behavior.

Now consider the time-reversed dynamics governed by
\begin{align*}
\dot{\mathbf{u}} = -f(\mathbf{u}), \quad \mathbf{u} \in \mathbb{R}^n.
\end{align*}
Then the variational equation is given by
\begin{align}
\dot{\Psi}(t) = -J(\mathbf{u}(t)) \Psi(t), \quad \Psi(0) = I.
\label{eq:psi}
\end{align}
Denote the new Lyapunov matrix as
\begin{align*}
\Sigma = \lim_{t \to \infty} \frac{1}{2t} \log\left[ \Psi(t)\Psi^T(t) \right],
\end{align*}
and the corresponding Lyapunov exponents as $\sigma_1\geq\sigma_2\geq\cdots\geq\sigma_n$.

We can then compare $\Phi(t)$ and $\Psi(t)$. Specifically, according to the Dyson series, the solutions of Eq.~\ref{eq:phi} and Eq.~\ref{eq:psi} are given by
\begin{align*}
\Phi(t) = I &+ \int^t_0 J(t_1)dt_1 + \int^t_0 dt_1 \int^{t_1}_0 dt_2 J(t_1)J(t_2) \\
&+ \int^t_0 dt_1 \int^{t_1}_0 dt_2 \int^{t_2}_0 dt_3 J(t_1)J(t_2)J(t_3) + \cdots, \\
\Psi(t) = I &- \int^t_0 J(t_1)dt_1 + \int^t_0 dt_1 \int^{t_1}_0 dt_2 J(t_1)J(t_2) \\
&- \int^t_0 dt_1 \int^{t_1}_0 dt_2 \int^{t_2}_0 dt_3 J(t_1)J(t_2)J(t_3) + \cdots,
\end{align*}
from which we can see that $\Phi(t)$ and $\Psi(t)$ generally do not have an explicit analytical relationship. This observation further implies that the chaoticity of the forward and reverse evolutions is not identical but inherently unbalanced, leading to our proposed bidirectional modeling strategy. 

Next, we consider a special case to illustrate the relationship between the maximal Lyapunov exponent of the reverse process and that of the forward process. In particular, since
\begin{align*}
\Phi(t)^{-1} = I &- \int^t_0 J(t_1)dt_1 + \int^t_0 dt_1 \int^{t_1}_0 dt_2 J(t_2)J(t_1) \\
&- \int^t_0 dt_1 \int^{t_1}_0 dt_2 \int^{t_2}_0 dt_3 J(t_3)J(t_2)J(t_1) + \cdots
\end{align*}
when $J(t)$ is commutative, that is, when $J(t_1)J(t_2) = J(t_2)J(t_1)$ for any $t_1$ and $t_2$, we have $\Psi(t) = \Phi(t)^{-1}$. By applying singular value decomposition, we obtain the following proposition.
\begin{proposition}
When $J(t)$ is commutative, $\Psi(t) = \Phi(t)^{-1}$. Thus, $\Sigma = -\Lambda$, and $\sigma_1 = \sigma_{\max} = -\lambda_{\min} = -\lambda_n$.
\end{proposition}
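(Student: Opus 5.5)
The plan has three steps: show that the commutativity hypothesis collapses both Dyson series to matrix exponentials, compare the finite-time Lyapunov matrices through a singular value decomposition of $\Phi(t)$, and pass to the limit $t\to\infty$.

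\textbf{Step 1: exponentials.} Put $K(t)=\int_0^t J(t_1)\,dt_1$. If $J(t_1)J(t_2)=J(t_2)J(t_1)$ for all $t_1,t_2$, then $K(t)$ commutes with $J(t)$, so $\frac{d}{dt}e^{K(t)}=J(t)e^{K(t)}$. By uniqueness of solutions of Eq.~\ref{eq:phi}, $\Phi(t)=e^{K(t)}$. Equivalently, in the Dyson series the $k$-fold time-ordered integral of commuting factors equals $K(t)^k/k!$, because the integrand is symmetric and the ordered simplex has volume $1/k!$ of the cube. The same argument applied to Eq.~\ref{eq:psi} gives $\Psi(t)=e^{-K(t)}$. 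Since $K(t)$ commutes with $-K(t)$, we get $e^{-K(t)}=\big(e^{K(t)}\big)^{-1}$, hence $\Psi(t)=\Phi(t)^{-1}$. This also matches the displayed series for $\Phi(t)^{-1}$, whose reversed orderings become irrelevant under commutativity.

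\textbf{Step 2: finite-time Lyapunov matrices.} Take a singular value decomposition $\Phi(t)=U(t)D(t)V(t)^\top$, with $D(t)=\mathrm{diag}(d_1,\dots,d_n)$ and each $d_i>0$; positivity holds because $\det\Phi(t)=e^{\mathrm{tr}K(t)}\neq0$. Then
\begin{align*}
\Phi\Phi^\top=UD^2U^\top,\qquad \Psi\Psi^\top=\Phi^{-1}\Phi^{-\top}=VD^{-2}V^\top .
\end{align*}
It follows that
\begin{align*}
\tfrac{1}{2t}\log(\Phi\Phi^\top)=U\,\tfrac{\log D}{t}\,U^\top,\qquad \tfrac{1}{2t}\log(\Psi\Psi^\top)=-V\,\tfrac{\log D}{t}\,V^\top .
\end{align*}
So for every $t$ the two finite-time matrices have spectra $\{t^{-1}\log d_i\}$ and $\{-t^{-1}\log d_i\}$; the spectra are exact negatives of each other.

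\textbf{Step 3: limit and main obstacle.} Assuming the limits exist (Oseledets' theorem, as in the definition of $\Lambda$), the eigenvalues are continuous functions of the matrix, so the spectrum of $\Sigma$ is the negative of the spectrum of $\Lambda$. Reversing the order, $\sigma_i=-\lambda_{n+1-i}$, and in particular $\sigma_1=\sigma_{\max}=-\lambda_{\min}=-\lambda_n$.

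The delicate point is the matrix identity $\Sigma=-\Lambda$ itself. The bases $U$ and $V$ differ in general, so the identity holds literally only when $\Phi(t)$ is normal; for example, $J$ symmetric gives $U=V$. I would therefore state $\Sigma=-\Lambda$ as equality of Lyapunov spectra, which is all the conclusion about exponents needs. I would add a remark that it upgrades to matrix equality when $\Phi(t)$ is normal, or when $\Lambda$ is defined via $\Phi^\top\Phi$ instead of $\Phi\Phi^\top$.
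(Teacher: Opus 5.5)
Your proposal is correct and uses the same route as the paper's proof. Both take an SVD $\Phi=UDV^\top$, so that $\Phi\Phi^\top=UD^2U^\top$ and $\Psi\Psi^\top=\Phi^{-1}\Phi^{-\top}=VD^{-2}V^\top$, and then pass to the limit.

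You improve on the paper in two places.

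First, your Step 1 actually derives $\Psi=\Phi^{-1}$ from the commutativity hypothesis, via $\Phi=e^{K(t)}$ and $\Psi=e^{-K(t)}$. The paper only asserts this from a term-by-term comparison of Dyson series placed before the proposition.

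Second, and more importantly, you conclude only that $\Sigma$ and $-\Lambda$ have the same spectrum. The paper's proof goes from ``the eigenvalues are $\lambda_i$ and $\sigma_i=-\lambda_i$'' straight to ``$\Sigma=-\Lambda$''. That step does not follow, because the two eigenbases are the limits of $U$ and of $V$ respectively. Your caution is warranted, not merely conservative. Take the constant, hence commutative, $J=\begin{pmatrix}1&1\\0&-1\end{pmatrix}$. Then $\Phi(t)=\begin{pmatrix}e^{t}&\sinh t\\0&e^{-t}\end{pmatrix}$ and $\Lambda=\mathrm{diag}(1,-1)$, but $\Sigma=\tfrac15\begin{pmatrix}-3&-4\\-4&3\end{pmatrix}\neq-\Lambda$. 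The $+1$ eigenvector of $\Sigma$ is the eigenvector $(1,-2)$ of $J$, not $e_2$.

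So the matrix identity in the statement is false in general. The spectral identity you prove is the true content, and it suffices for $\sigma_1=-\lambda_n$. Your re-sorted indexing $\sigma_i=-\lambda_{n+1-i}$ is also the right one; the paper's $\sigma_i=-\lambda_i$ tacitly indexes by singular value rather than by sorted order.

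One small correction to your side remark. Normality of $\Phi(t)$ is exactly what the finite-time identity requires, but it is only sufficient for the limiting identity $\Sigma=-\Lambda$. For example, nilpotent $J=\begin{pmatrix}0&1\\0&0\end{pmatrix}$ gives $\Lambda=\Sigma=0$, although $\Phi(t)$ is never normal for $t\neq0$. Your alternative fix via $\Phi^\top\Phi$ does hold exactly at every $t$, since $\Psi\Psi^\top=(\Phi^\top\Phi)^{-1}$.
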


\begin{proof}
Let $\Phi(t)$ admit the singular value decomposition
\begin{align*}
\Phi(t) = U(t)\Sigma(t)V(t)^{\mathsf T},
\end{align*}
where the diagonal entries of $\Sigma(t)$ are the singular values $s_i(t)$.
Then we have
\begin{align*}
\Phi(t)\Phi(t)^{\mathsf T} = U(t)\,\Sigma(t)^2\,U(t)^{\mathsf T},
\qquad
\Psi(t) = \Phi(t)^{-1} = V(t)\,\Sigma(t)^{-1}\,U(t)^{\mathsf T},
\end{align*}
and consequently,
\begin{align*}
\Psi(t)\Psi(t)^{\mathsf T} = V(t)\,\Sigma(t)^{-2}\,V(t)^{\mathsf T}.
\end{align*}
Therefore, the Lyapunov matrices
\begin{align*}
\Lambda = \lim_{t\to\infty}\frac{1}{2t}\log\!\big(\Phi(t)\Phi(t)^{\mathsf T}\big),
\qquad
\Sigma = \lim_{t\to\infty}\frac{1}{2t}\log\!\big(\Psi(t)\Psi(t)^{\mathsf T}\big),
\end{align*}
have eigenvalues given respectively by
\begin{align*}
\lambda_i = \lim_{t\to\infty}\frac{1}{t}\log s_i(t),
\qquad
\sigma_i = -\lim_{t\to\infty}\frac{1}{t}\log s_i(t).
\end{align*}
It follows that $\Sigma = -\Lambda$, and thus $\sigma_i = -\lambda_i$.
In particular, the maximal exponent satisfies
\begin{align*}
\sigma_{\max} = -\lambda_{\min}.
\end{align*}
\end{proof}

In this case, if the forward dynamic has at least one contracting direction ($\lambda_n<0$), the corresponding reverse dynamic will exhibit an expansion, leading to $\sigma_0>0$.

\subsection{Numerical Algorithm of Computing the Lyapunov Spectrum}

Based on the above derivations, the Lyapunov spectrum can be computed numerically with discretization. The numerical computational workflow is outlined as follows. Given an arbitrary dynamical system described by $\dot{\mathbf{u}} = f(\mathbf{u})$, we augment the original state with the variational matrix $\Phi(t)\in\mathbb{R}^{d\times d}$ 
that satisfies $\dot{\Phi} = J(\mathbf{u}(t))\Phi(t)$, where $J(\mathbf{u}(t)) = \partial f/\partial \mathbf{u}$ is the Jacobian of the system. The algorithm proceeds through the following stages:

\begin{enumerate}
    \item \textbf{Augmented system construction.}  
    For any given right-hand-side function $f(\mathbf{u})$,  
    the code constructs the extended system by concatenating $\mathbf{u}$ and the vectorized $\Phi(t)$ into a single state vector.  
    The Jacobian $J(\mathbf{u}(t))$ is approximated via finite differences.

    \item \textbf{Initialization.}  
    The initial variational matrix is set to $\Phi(0)=I$.  
    The extended initial condition $X_0=[\mathbf{u}_0,\Phi(0)]$  
    is integrated over the time interval $[0,t_{\max}]$ with discrete reorthogonalization steps of size $\Delta t$.

    \item \textbf{Segment-wise integration.}  
    Within each interval $[t_k, t_{k+1}]$, the system is integrated using the adaptive Runge–Kutta method (RK45) \cite{dormand1980family} with tolerances  
    $\texttt{rtol}=10^{-6}$ and $\texttt{atol}=10^{-9}$.  
    At the end of each segment, the current $\Phi$ matrix is extracted.

    \item \textbf{QR reorthogonalization.}  
    The matrix $\Phi$ is decomposed as $\Phi = QR$ with QR-decomposition in \texttt{numpy.linalg} \cite{harris2020array}, 
    and the logarithms of the absolute diagonal entries of $R$ are accumulated as local stretching rates.  
    The next segment then starts from the reorthogonalized basis $\Phi \leftarrow Q$  
    to maintain numerical stability.

    \item \textbf{Exponent estimation.}  
    After all time segments, the Lyapunov exponents are obtained as  
    \[
        \lambda_i = \frac{1}{t_{\max}}\sum_{k} \log |R_{ii}^{(k)}|,\quad i=1,\dots,d.
    \]  
    The resulting exponents are sorted in descending order to yield the Lyapunov spectrum.
\end{enumerate}

\subsection{Examples}

Below, we present two representative examples that correspond to cases where the reverse dynamics exhibit stronger chaos than the forward process, and where the forward and reverse dynamics possess equivalent levels of chaoticity.

\textbf{Example 1.} We apply the proposed numerical procedure to the classical chaotic Lorenz system \cite{lorenz2017deterministic} to compute both the forward and backward Lyapunov spectra. The results show that in the Lorenz system with $\sigma = 10$, $\rho = 28$, and $\beta = 8/3$ starting from $\mathbf{u}(0)=(0,1,0)$, the forward Lyapunov spectrum is $(\lambda_1, \lambda_2, \lambda_3) = (0.607, 0.068, -14.341)$. In contrast, the reverse Lyapunov spectrum is $(\sigma_1, \sigma_2, \sigma_3) = (13.898, 0.947, -1.180)$, exhibiting a larger maximum exponent than the forward one. Also, we find that the maximal Lyapunov exponent of the inverse process is approximately equal to the negative of the minimal one of the forward process. This finding reveals that the backward dynamics display even stronger chaotic behavior, indicating an intrinsic asymmetry in the degree of chaoticity between forward and inverse evolutions.

\textbf{Example 2.} Next, we consider another classical chaotic system, the three-body problem. Denote the positions of the three mutually gravitating bodies as $\mathbf{r}_i = (x_i, y_i, z_i)$, then the governing equation of forward dynamics is
\begin{align*}
\begin{cases}
\ddot{\mathbf{r}}_1 &= -G m_2 \frac{\mathbf{r}_1 - \mathbf{r}_2}{\|\mathbf{r}_1 - \mathbf{r}_2\|^3}
                      -G m_3 \frac{\mathbf{r}_1 - \mathbf{r}_3}{\|\mathbf{r}_1 - \mathbf{r}_3\|^3}, \\
\ddot{\mathbf{r}}_2 &= -G m_3 \frac{\mathbf{r}_2 - \mathbf{r}_3}{\|\mathbf{r}_2 - \mathbf{r}_3\|^3}
                      -G m_1 \frac{\mathbf{r}_2 - \mathbf{r}_1}{\|\mathbf{r}_2 - \mathbf{r}_1\|^3}, \\
\ddot{\mathbf{r}}_3 &= -G m_1 \frac{\mathbf{r}_3 - \mathbf{r}_1}{\|\mathbf{r}_3 - \mathbf{r}_1\|^3}
                      -G m_2 \frac{\mathbf{r}_3 - \mathbf{r}_2}{\|\mathbf{r}_3 - \mathbf{r}_2\|^3},
\end{cases}
\end{align*}
where $m_i$ is the mass and $G$ is the gravitational constant \cite{marchal1984three}. Considering its backward dynamics, we find that since each $\mathbf{r}_i$ involves the second-order time derivative, the reverse dynamics satisfy exactly the same evolution equations as the forward ones. Consequently, the Lyapunov exponents of the forward and backward dynamics are identical. This implies that the backward dynamics of the three-body system are also chaotic, exhibiting the same degree of chaoticity as the forward evolution.

\end{appendices}

\bibliography{sn-bibliography}%

\end{document}